\theoremstyle{plain}
\newtheorem{theorem}{Theorem}[section]
\newtheorem{proposition}[theorem]{Proposition}
\newtheorem{lemma}[theorem]{Lemma}
\theoremstyle{definition}
\newtheorem{definition}[theorem]{Definition}
\theoremstyle{remark}
\newcommand{\xmark}{\ding{55}}%
\definecolor{forestgreen}{RGB}{100,255,255}
\renewcommand{\paragraph}{%
  \@startsection{paragraph}{4}%
  {\z@}{0ex \@plus .5ex \@minus .2ex}{-0.2em}%
  {\normalfont\normalsize\bfseries}%
}
\definecolor{softgreen}{RGB}{199, 233, 192}  
\definecolor{mintgreen}{RGB}{120, 194, 173}  
\definecolor{lightmintgreen}{RGB}{159, 213, 182} 
\definecolor{palegreen}{RGB}{230, 245, 224}  
\definecolor{mylightestgray}{RGB}{235, 235, 235}
\definecolor{peachpink}{RGB}{255, 183, 177}   
\definecolor{salmonred}{RGB}{255, 99, 71}     
\definecolor{crimsonred}{RGB}{220, 20, 60} 
\definecolor{lightpink}{RGB}{255, 224, 230}  
\DeclareRobustCommand{\indicator}[1]{\ensuremath{\mathbbm{1}\left[#1\right]}}
\newcommand{\mbx}{\textbf{x}}
\newcommand{\mby}{\textbf{y}}
\newcommand{\mbz}{\textbf{z}}
\newcommand{\mba}{\textbf{a}}
\newcommand{\E}{\mathbb{E}}
\newcommand{\g}{\,\vert\,}
\newcommand{\minus}{\scalebox{0.75}[1.0]{$-$}}
\icmltitlerunning{Preference learning made easy: Everything should be understood through win rate}
\begin{document}

\twocolumn[
\icmltitle{Preference learning made easy:\\Everything should be understood through win rate}



\icmlsetsymbol{equal}{*}

\begin{icmlauthorlist}
\icmlauthor{Lily H. Zhang}{cds}
\icmlauthor{Rajesh Ranganath}{cds,courant}
\end{icmlauthorlist}

\icmlaffiliation{cds}{Center for Data Science, New York University, New York, USA}
\icmlaffiliation{courant}{Courant Institute, New York University, New York, USA}

\icmlcorrespondingauthor{Lily H. Zhang}{lily.h.zhang@nyu.edu}

\icmlkeywords{Machine Learning, ICML, preference learning, RLHF, DPO, SFT, language model}

\vskip 0.3in
]



\printAffiliationsAndNotice{}  

\begin{abstract}
Preference learning, or the task of aligning generative models to preference comparison data, has yet to reach the conceptual maturity of classification, density estimation, etc. To close this gap, this work presents a framework to understand preference learning starting from the sampling distribution of pairwise preference data. First, we prove that the only evaluation of a generative model that respects both preferences and prevalences in the data distribution is a form of win rate, justifying win rate as the focal point to understand preference learning. We then analyze preference learning methods as win rate optimization (WRO) or non-WRO. We present novel instances of WRO beyond existing examples (RLHF, NLHF) and identify two key theoretical benefits of all such methods. We prove that common non-WRO methods like DPO and SFT on preferred samples lack these properties and suggest ways to mitigate such theoretical limitations. We also show that WRO underperforms in practice due optimization difficulties and that optimization success predicts performance better than choices which affect the objective's solution. Our analysis highlights best practices for existing methods and provides recommendations for future research, guided by the principle that one should either align non-WRO methods more closely with WRO or improve the optimization of WRO objectives.
\end{abstract}

\section{Introduction}
Learning from preference data, often referred to as human feedback, has emerged as a key step in training large language models, particularly given the success of reinforcement learning from human feedback (RLHF) \citep{Christiano2017DeepRL} on state-of-the-art and high-profile language models such as GPT-4 \citep{openai2024gpt4technicalreport}. 
The goal 
of learning from preference data
is to finetune powerful base language models to output generations more in line with human preferences \citep{Stiennon2020LearningTS,Ouyang2022TrainingLM},
motivated by the fact that pretraining on internet-scale data has enabled large language models to exhibit fluent generations of text \citep{Minaee2024LargeLM}
but not necessarily responses aligned with what humans prefer to see.

In recent years, the landscape of algorithms and evaluations for preference learning has grown significantly \citep{Kaufmann2023ASO,jiang2024surveyhumanpreferencelearning}, resulting in a complex and often fragmented field that can pose challenges for practitioners and researchers deciding where to focus their efforts. Whereas well-established machine learning tasks such as classification are grounded in principles such as maximum likelihood, preference learning lacks such a unifying conceptual foundation. To advance the field, we thus first ask: what underlying principles underpin preference learning?

We address this question 
by developing a 
framework 
starting from the sampling distribution implied by pairwise preference data. 
We first tackle the question of evaluating a model under the preference learning paradigm. We show that the only evaluation of a generative model rooted in the preference data sampling distribution itself is a form of win rate we call $h$-win rate; any other notions of good either do not evaluate the model or respect the preference data, or are based on assumptions outside of either (\Cref{sec:setup}). 
From this insight, we introduce a win rate-centric framework for understanding the landscape of preference learning methods.
Given $h$-win rate is the only relevant evaluation without additional assumptions, 
we relate common preference learning algorithms 
to directly optimizing for win rate, dividing the preference learning space into win rate optimization (WRO) and non-WRO objectives. 
In \Cref{sec:framework}, we generalize the space of WRO beyond existing methods in the literature and identify two theoretical benefits of such methods:
(1) win rate-correspondence, i.e., optimizing the objective corresponds to optimizing for $h$-win rate; and
(2) win rate-consistency, i.e., the solution can achieve the maximum win rate possible over a competitor as regularization strength goes to zero.
In \Cref{sec:not_wro}, we show that direct preference optimization (DPO) and other direct alignment algorithms \cite{rafailov2024scalinglawsrewardmodel} fail win rate-correspondence, and that supervised finetuning (SFT) on preferred samples additionally fails win rate-consistency. 
In \Cref{sec:experiments}, we show that despite their theoretical benefits, WRO methods underperform relative to expectations due to difficulties in optimization, which has a much more significant effect on performance than other design choices that distinguish WRO methods from each other. 
We conclude with takeaways 
for current practice 
and future research in preference learning (\Cref{sec:discussion}). 

Our contributions can be summarized as follows:
\begin{enumerate}
    \item We prove that the only evaluation of a generative model grounded in the preference data distribution is win rate. This result justifies using win rate as the focal point to understand the landscape of preference learning.
    \item We present a win rate-centric framework to understand preference learning. 
    Under this framework, we:
    \begin{enumerate}
        \item introduce win rate-correspondence and consistency and show that WRO methods satisfy both while non-WRO do not;
        \item highlight the central role of optimization success for the performance;
        \item motivate best practices for existing methods (e.g., win rate checkpointing with direct alignment algorithms, multiple seeds with WRO algorithms, inducing diversity for SFT, combining different approaches); and 
        \item provide recommendations for future research---namely, to focus on better surrogates for or improved optimization of WRO.
    \end{enumerate}
\end{enumerate}

\section{Related Work}
\label{sec:related}
Our work is most closely related to previous work in win rate evaluation and optimization as well as analysis of RLHF and preference learning objectives.

\paragraph{Win rate evaluation and optimization.} 
Win rate is already a central evaluation in preference learning \citep{alpaca_eval,zheng2024judging}; however, our work goes further and proves that it is the only evaluation grounded in the sampling distribution itself, thus motivating its use as the central object to understand the rest of the preference learning landscape.
Several works have proposed methods that perform some form of win rate optimization \citep{munos2023nash, swamy2024minimaximalistapproachreinforcementlearning, rosset2024directnashoptimizationteaching}, including as a minimax optimization with a dynamic (rather than fixed) opponent model. Our work provides a more general framework to characterize the landscape of win rate optimization (WRO) methods beyond the specific instances that currently exist in the literature. Moreover, our analysis points to the theoretical merit of these approaches, while our experiments demonstrate the implementation bottlenecks of them.

\paragraph{Analyzing RLHF, DPO, and other preference learning methods.}
Our work is related to work that seeks to 
better understand RLHF, DPO, and other existing methods in preference learning (e.g., best-of-n). 
Examples include benchmarking generalization and diversity \citep{kirk2024understanding}, comparing on- vs off-policy approaches \citep{tajwar2024preferencefinetuningllmsleverage}, investigating length bias \citep{singhal2024longwaygoinvestigating}, and disentangling design choices empirically \citep{ivison2024unpackingdpoppodisentangling}. 
For RLHF, 
existing works consider
the complexity of proximal policy optimization
\citep{ahmadian2024back},
vanishing gradients \citep{Razin2023VanishingGI},
reward model overoptimization \citep{zhu2024iterative}, or
limitations of the Bradley-Terry assumption to relate preferences to rewards
\citep{wang2024transforming,azar2023general,munos2023nash,Siththaranjan2023DistributionalPL}.
For DPO, there exists not only large space of alternative direct alignment algorithms (e.g., \citep{zhao2023slichf,azar2023general, xu2024contrastivepreferenceoptimizationpushing,  huang2024correctingmythosklregularizationdirect,pal2024smaugfixingfailuremodes,xu2024thingscringeothersiterative}) but also works which analyze its limited ability to flip rankings 
\citep{Chen2024PreferenceLA}, the decrease in chosen and rejected log probabilities \citep{razin2024unintentionalunalignmentlikelihooddisplacement}, and its limitations under heterogeneous preferences \citep{Shirali2025DirectAW}.
\citet{gui2024bonbonalignmentlargelanguage} show that the target distribution of best-of-n for different choices of $n$ yields a win rate vs. KL divergence curve close to that of RLHF's for different choices of divergence constraint; \Cref{thm:sft_gap} of this work generalizes their win rate result for best-of-2 from a deterministic to an arbitrary preference environment.
\citet{azar2023general} present a family of $\Psi$-Preference Optimization objectives,
show that RLHF falls within this family, and propose Identity Preference Optimization (IPO); this work presents a larger family of WRO objectives beyond $\Psi$-Preference Optimization and categorizes $\Psi$-Preference Optimization and IPO separately into WRO and non-WRO objectives, highlighting 
theoretical benefits of the former and limitations of the latter. 

\section{What evaluations make sense for preference learning?}
\label{sec:setup}
In this section, we analyze how to evaluate a generative model under the preference learning landscape. 
Various evaluations have been used in the preference learning literature, from win rate \citep{alpaca_eval} and reward \citep{ahmadian2024back} to implicit reward accuracy \citep{rafailov2024scalinglawsrewardmodel} and ranking accuracy \citep{Chen2024PreferenceLA}. 
Here, we critically assess what evaluations are meaningful under the pairwise preference comparisons data itself.

We first describe the underlying sampling distribution of the pariwise comparison data used in preference learning (\Cref{sec:sampling_dist}). Then we define and motivate two important conditions for an evaluation function, preference-consistency and prevalence-consistency, and prove that the only evaluations that satisfy both conditions is a form of win rate we call $h$-win rate
(\Cref{sec:win_rate}). This result justifies win rate as the singular focus in preference learning.

\subsection{The Sampling Distribution}
\label{sec:sampling_dist}
The goal of preference learning is to learn a generative model that performs well in a given context.  However, whereas typical maximum likelihood training employs samples from the distribution of interest, the setup of preference learning does not: only samples from generative model competitors (sometimes the same model) and their relative preference within a pair are available. 

The sampling distribution for preferences consists of input $\mbx$, candidate outputs $\mby_0$ and $\mby_1$, and a label $\ell \in \{0, 1\}$ denoting which of $\mby_0$ or $\mby_1$ is preferred. Let $\ell=1$ denote that $\mby_1$ is preferred, and $\ell=0$ denote that $\mby_0$ is preferred. Then, the overall sampling distribution can be defined as follows:

\begin{definition}
\label{def:sampling_dist}
A \textbf{sampling distribution for (pairwise) preference learning} is a distribution over input $\mbx \in \mathcal{X}$, candidate outputs $\mby_0, \mby_1 \in \mathcal{Y}$, and preference label $\ell \in \{0, 1\}$ defined by:
    \begin{enumerate}
    \itemsep0em
    \item Query distribution: $p(\mbx)$
    \item Generation competitor 0: $p(\mby_0 \g \mbx)$
    \item Generation competitor 1: $p(\mby_1 \g \mbx)$
    \item Preference classifier: $p(\ell \g \mbx, \mby_0, \mby_1)$.
\end{enumerate}
1, 2, and 3 are \textit{user-specified} distributions; 1 denotes the inputs of interest, and 2 and 3 are the candidate competitors one chooses to evaluate. 4 is only distribution that cannot be directly specified; rather, it is defined by the environment in which the user chooses to collect the preferences.
\end{definition}

Generation competitor 0 and 1 can be the same distribution and often are in existing open-source preference datasets \citep{h4stackexchange,Ethayarajh2021UnderstandingDD}.

\subsection{Win Rate is the only evaluation that can matter}
\label{sec:win_rate}
The goal of preference learning is to learn some generative model $p^*(\mby \g \mbx)$ that performs well under the preference environment for a given query distribution (we refer to this as the \textit{query-preference environment}). 
Learning such a generative model requires a definition of what is good.

Consider 
an evaluation function $\phi$ which maps
a generative model $p(\mby \g \mbx)$, query-preference environment $\mathcal{E} = (p(\mbx), p(\ell\g\mbx, \mby_0, \mby_1))$, and anchor distribution $p(\mby_0 \g \mbx)$ to a scalar: $\phi_{p(\mby_0 \g \mbx)}(p(\mby\g \mbx), \mathcal{E}) \in \mathbb{R}$.\footnote{ 
We will optionally write $\phi_{p(\mby_0 \g \mbx)}$ as $\phi$ when the anchor is clear from context.}
Intuitively,
$\phi$ should respect properties of the preference sampling distribution.
We formalize criterion for $\phi$ below.
\begin{definition}
\label{def:grounded}
    Any evaluation function $\phi$ is \textbf{grounded} in a given preference distribution if 
    \begin{enumerate}[itemsep=0em]
       \item \label{condition-1}($\phi$ is preference-consistent): given a strictly increasing function $h$, then for any singleton query environment $p(\mbx) = \indicator{\mbx=\mbx'}$, generative model $p(\mby\g \mbx) = \indicator{\mby=\mby'}$, anchor distribution $p(\mby_0\g \mbx) = \indicator{\mby_0=\mby_0'}$: 
        \begin{align*}
        \phi(p(\mby\g \mbx), \mathcal{E}) = h \cdot p(\ell=1 \g \mbx', \mby_0', \mby')\text{; and}
        \end{align*}
        \item \label{condition-2}($\phi$ is prevalence-consistent): for $a, b \geq 0$ and $a + b = 1$:
        \begin{enumerate}
        \item \label{condition_2_generator} for generator $p(\mby\g \mbx) = a p_1(\mby\g \mbx) + b p_2(\mby\g \mbx)$: \begin{align*}
            &\phi(p(\mby\g \mbx), \mathcal{E}) = a \phi(p_1(\mby\g \mbx), \mathcal{E}) + b \phi(p_2(\mby\g \mbx), \mathcal{E});
        \end{align*}
        \item \label{condition_2_query} for query distribution $p(\mbx) = a p_1(\mbx) + b p_2(\mbx)$, letting $\mathcal{E}_i = (p_i(\mbx), p(\ell=1\g\mbx, \mby_0, \mby_1))$:
        \begin{align*}
            &\phi(p(\mby\g \mbx), \mathcal{E}) = a \phi(p(\mby\g \mbx), \mathcal{E}_1) + b \phi(p(\mby\g \mbx), \mathcal{E}_2);
        \end{align*}
        \item \label{condition_2_anchor} for anchor distribution $p(\mby_0\g \mbx) = a p_1(\mby_0\g \mbx) + b p_2(\mby_0\g \mbx):$
        \begin{align*}
            &\phi_{p(\mby_0\g \mbx)}(p(\mby\g \mbx), \mathcal{E}) = a \phi_{p_1(\mby_0\g \mbx)}(p(\mby\g \mbx), \mathcal{E}) \nonumber \\
        &\qquad \qquad \qquad \qquad  \;\;\;+ b \phi_{p_2(\mby_0\g \mbx)}(p(\mby\g \mbx), \mathcal{E}).
        \end{align*}
        \end{enumerate}
    \end{enumerate}
\end{definition}
Preference-consistency ensures that 
the evaluation 
can be reduced to an increasing function of the
preference classifier in the base case where 
query, model, and anchor distributions are all a single deterministic value. The transformation function $h$ dictates that the evaluation need not exactly equal the preference probability, just that the evaluation should be higher when the preference probability is higher.
Prevalence-consistency ensures that the contribution of $p(\ell=1 \g \mbx, \mby_0, \mby_1)$
for a given $(\mbx, \mby_0, \mby_1)$
gets weighted appropriately by its prevalence in the query environment, anchor distribution, and generative model, respectively. 

What are the consequences if these properties are not met? If preference-consistency is not satisfied, a model can be assigned a high value under $\phi$ even if it only outputs a dispreferred response according to the ground truth preference. For prevalence-consistency, if \ref{condition_2_generator} is not satisfied, then $\phi$ weights preferences according to a distribution that is \emph{not} how the model generates; if \ref{condition_2_query} is not satisfied, then $\phi$ does not reflect the chosen queries; and if \ref{condition_2_anchor} is not satisfied, then $\phi$ does not compare to the chosen anchor.

The only evaluation that satisfies \Cref{def:grounded} is a form of win rate:
\begin{proposition}\label{prop:win_rate}
    For $p$ with discrete support, $\phi$ is grounded, as defined in \Cref{def:grounded}, if and only if 
    \vskip -.5in
    \begin{align}\label{eq:win_rate}
        &\phi_{p(\mby_0 \g \mbx)}(p(\mby\g \mbx), \mathcal{E}) \nonumber \\= &\E_{p(\mbx)}\E_{p(\mby \g \mbx)}\E_{p(\mby_0 \g \mbx)}[h \cdot p(\ell = 1 \g \mbx, \mby_0, \mby)]
    \end{align}
    for choice of strictly increasing function h.
\end{proposition}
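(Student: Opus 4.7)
The plan is to prove both directions of the biconditional. The reverse direction---that the expectation in \eqref{eq:win_rate} is grounded---is immediate: preference-consistency follows because the three nested expectations collapse to $h \cdot p(\ell=1\g\mbx', \mby_0', \mby')$ when all three distributions are point masses, and the three prevalence-consistency conditions follow from linearity of expectation in the query, generator, and anchor distributions respectively.

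For the forward direction, the plan is to use the three prevalence-consistency conditions to peel off the three distributions one at a time, reducing any grounded $\phi$ to a triple sum over point masses whose summand can then be identified via preference-consistency. First I would apply \ref{condition_2_query} with the decomposition $p(\mbx) = \sum_{\mbx'} p(\mbx') \indicator{\mbx = \mbx'}$ to obtain $\phi(p(\mby\g\mbx), \mathcal{E}) = \sum_{\mbx'} p(\mbx')\, \phi(p(\mby\g\mbx), \mathcal{E}_{\mbx'})$, where $\mathcal{E}_{\mbx'}$ is the environment with a point-mass query at $\mbx'$. Next, for each fixed $\mbx'$, I would apply \ref{condition_2_generator} to decompose the generator into point masses at each $\mby'$ with weights $p(\mby'\g\mbx')$, and analogously \ref{condition_2_anchor} to decompose the anchor into point masses with weights $p(\mby_0'\g\mbx')$. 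Each summand of the resulting triple sum is $\phi$ evaluated with all three distributions being point masses, which by \ref{condition-1} equals $h\cdot p(\ell=1\g\mbx', \mby_0', \mby')$; collecting the weighted sum yields exactly \eqref{eq:win_rate}.

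The main obstacle is making the generator and anchor decompositions formally valid, since \ref{condition_2_generator} and \ref{condition_2_anchor} are stated with mixture weights that are constants in $\mbx$, whereas the natural point-mass decompositions of the conditionals $p(\mby\g\mbx)$ and $p(\mby_0\g\mbx)$ involve weights $p(\mby'\g\mbx)$ and $p(\mby_0'\g\mbx)$ that genuinely depend on $\mbx$. To bridge this gap I would first establish a localization lemma: once the query is a point mass at $\mbx'$, the value of $\phi$ depends on the generator only through $p(\mby\g\mbx')$ and on the anchor only through $p(\mby_0\g\mbx')$. This reduces the decomposition problem to decomposing plain distributions over $\mby$ (resp.\ $\mby_0$), whose point-mass weights $p(\mby'\g\mbx')$ and $p(\mby_0'\g\mbx')$ are constants since $\mbx'$ has been fixed; with this in hand, \ref{condition_2_generator} and \ref{condition_2_anchor} apply directly. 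Proving the localization lemma itself would be the delicate step, most naturally by applying \ref{condition_2_generator} (resp.\ \ref{condition_2_anchor}) to suitably chosen pairs of conditionals that agree at $\mbx'$ but differ elsewhere. Discreteness of the support ensures all sums are countable and well-defined throughout.
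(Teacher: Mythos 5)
Your proposal follows essentially the same route as the paper's proof: the "if" direction by noting that the nested expectations collapse at point masses (preference-consistency) and are linear in each distribution (prevalence-consistency), and the "only if" direction by using prevalence-consistency to decompose the query, anchor, and generator into atoms and then applying preference-consistency to each atomic term. The only differences are bookkeeping: the paper justifies passing from two-component mixtures to countable atomic decompositions with an explicit induction lemma (which you use implicitly), and it never introduces your localization lemma---it simply applies the anchor and generator decompositions with weights $p(\mby_0'\g\mbx')$ and $p(\mby'\g\mbx')$ at the fixed query point, i.e.\ it reads the evaluation as depending on those conditionals only at $\mbx'$, so the delicate step you defer is sidestepped by interpretation rather than proved.
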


See \Cref{sec:win_rate_proof} for proof. \Cref{eq:win_rate} is the win rate of the generator $p(\mby \g \mbx)$ against an anchor distribution $p(\mby_0 \g \mbx)$ under the query-preference environment $\mathcal{E}$, for some choice of order-preserving transformation function $h$. 
When $h$ is the identity, we have vanilla win rate, which we will refer to as win rate or $\text{Win Rate}_{p(\mby_0 \g \mbx)}[p(\mby \g \mbx)]$. Often it is estimated via samples from the preference classifier, i.e., $\E_{p(\mbx,\mby,\mby_0)}\ell,$ where $\ell \sim p(\ell \g \mbx, \mby_0, \mby)$ \citep{alpaca_eval}. When $h$ is any other strictly increasing function, we have an $h$-variant of win rate, which we will refer to as $h$-Win Rate. 

Different $h$-variants 
can stretch preference probabilities in some regions while contracting them in others, effectively prioritizing certain preference probability differences over others in the evaluation of generator vs. anchor.
Note that placing non-identity functions $f, g$ in any other position, i.e., $\E_{p(\mbx)}f \cdot \E_{p(\mby \g \mbx)}g \cdot \E_{p(\mby_0 \g \mbx)}[h \cdot p(\ell = 1 \g \mbx, \mby_0, \mby)]$, breaks prevalence-consistency of query and generator respectively.

What about other evaluations? \Cref{prop:win_rate} proves that all other evaluations either break preference- or prevalance-consistency.
For instance, reward and ranking accuracy, typically computed on offline data, do not respect model prevalences given the pairs in the computation may be unlikely overall under the model; empirically, they have also been shown to correlate little with win rate \citep{rafailov2024scalinglawsrewardmodel}.
Average reward $\E_{p(\mbx, \mby)}r(\mbx, \mby)$ is equivalent to a translated $\text{logit}$-Win Rate under the Bradley-Terry assumption $p(\ell=1\g\mbx, \mby_0, \mby) = \sigma[r(\mbx, \mby) - r(\mbx, \mby_0)]$ \citep{azar2023general}, but the underlying preference environment need not satisfy this assumption.\footnote{Note that the Bradley Terry assumption suggests not only transitivity but a functional relationship in the preference probabilities, e.g., if $A$ is preferred over $B$ with probability .6 and $B$ is preferred over $C$ with probability $.7$, then the Bradley-Terry assumption states that $A$ must be preferred over $C$ with probability $\approx 0.778$.}

Given the insight that the only evaluation grounded in preference data alone is $h$-win rate with respect to some anchor distribution, we next analyze common preference learning algorithms based on how they relate to optimizing for win rate. In \Cref{sec:framework}, we characterize the space of Win Rate Optimization (WRO) objectives, highlighting existing and new methods that fall under this family, and discuss their theoretical benefits. In \Cref{sec:not_wro}, we analyze instances of non-WRO methods and their limitations.

\section{Preference learning through the Lens of Win Rate Optimization}
\label{sec:framework}

In this section, we characterize the space of objectives which directly optimize for win rate, discussing both
Win Rate Optimization (WRO) objectives (\Cref{sec:wro_objectives}) and games (\Cref{sec:wro_games}).
Then, we identify the theoretical benefits shared by all methods in the WRO family (\Cref{sec:wro_benefits}). 

\subsection{Win Rate Optimization Objectives}
\label{sec:wro_objectives}
The fact that \Cref{eq:win_rate} is the only grounded evaluation immediately provides an objective ${\mathcal O}_h(\theta)$ to optimize:
\begin{align}\label{eq:wro}
    &\max_\theta {\mathcal O}_h(\theta) = \max_\theta h\text{-Win Rate}_{p(\mby_0 \g \mbx)}[p_\theta(\mby_1 \g \mbx)] \nonumber \\
    = &\max_\theta \E_{p(\mbx) p_\theta(\mby_1 \g \mbx) p(\mby_0 \g \mbx)}[h \cdot p(\ell = 1 \g \mbx, \mby_0, \mby_1)].
\end{align}
WRO can be optimized by using score function/policy gradients \citep{Mohamed2019MonteCG, weng2018PG}. The solution to any objective in this family is the generator that maximizes the $h$-win rate over a given anchor. 

KL-constrained RLHF is KL-regularized WRO with $h$=logit and the Bradley-Terry assumption to estimate the preference classifier via a reward function \cite{azar2023general}. However, many other WRO instances exist, e.g., for any combination of $h$, anchor distribution, and estimate of $p(\ell = 1 \g \mbx, \mby_0, \mby_1)$ or $\E_{p(\mby_0 \g \mbx)}[h \cdot p(\ell = 1 \g \mbx, \mby_0, \mby_1)]$ yields a different WRO objective.\footnote{For instance, $\E_{p(\mby_0 \g \mbx)}[h \cdot p(\ell = 1 \g \mbx, \mby_0, \mby_1)]$ can estimated by learning the preference distribution $p(\ell = 1 \g \mbx, \mby_0, \mby_1)$ from data and estimating the expectation with samples from the anchor or
training a model to estimate this expectation directly, removing the need to additionally sample from $p(\mby_0 \g \mbx)$ during the policy optimization step. The RLHF setting of WRO in particular (namely, choice of $h$ = logit and BT assumption) also allows the expectation over the anchor $p(\mby_0\g\mbx)$ to be dropped, i.e., $\max_{p(\mby_1\g\mbx)} \E_{p(\mby_1\g\mbx)}\E_{p(\mby_0\g\mbx)}[h \cdot p(\ell=1\g\mbx,\mby_0,\mby_1)] = \max_{p(\mby_1\g\mbx)} \E_{p(\mby_1\g\mbx)}[r(\mbx, \mby)]$, thus requiring only that a reward model be estimated.} 
Different choices of $h$ and anchor yield different optimal solutions in general. When the BT assumption holds, however,
all objectives in \Cref{eq:wro} maximize all $h$-variants of win rate for all anchors.
\begin{proposition}
    (informal) Under the Bradley-Terry assumption, all  objectives in \Cref{eq:wro} with strictly increasing $h$ share the same optimal solution. This solution maximizes win rate overall all possible anchor distributions.
\end{proposition}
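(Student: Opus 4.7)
The plan is to exploit the Bradley-Terry assumption to rewrite the WRO objective as a pointwise-in-$\mbx$ maximization in $\mby_1$, and then show that the maximizer depends on neither $h$ nor the anchor distribution $p(\mby_0\g\mbx)$. Under BT, the objective in \Cref{eq:wro} becomes
\begin{align*}
    \max_\theta \E_{p(\mbx)}\E_{p_\theta(\mby_1 \g \mbx)} R(\mbx, \mby_1),
\end{align*}
where $R(\mbx, \mby_1) := \E_{p(\mby_0\g\mbx)}[(h\circ\sigma)(r(\mbx,\mby_1) - r(\mbx,\mby_0))]$. Since the outer expectation is linear in $p_\theta(\mby_1\g\mbx)$ for each $\mbx$, the unconstrained optimum places all mass on $\arg\max_{\mby_1} R(\mbx, \mby_1)$.

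Next I would show that this argmax set coincides with $\arg\max_{\mby_1} r(\mbx, \mby_1)$ for every admissible $h$ and every anchor. Because $h$ and $\sigma$ are both strictly increasing, so is their composition. Hence for each fixed $\mby_0$ and $\mbx$, the integrand $(h\circ\sigma)(r(\mbx,\mby_1)-r(\mbx,\mby_0))$ is maximized in $\mby_1$ exactly on $\arg\max_\mby r(\mbx,\mby)$. Any $\mby_1^\star$ in that set therefore gives a pointwise (in $\mby_0$) upper bound on the integrand, which carries through the $p(\mby_0\g\mbx)$ expectation. This yields the same optimal policy $p^\star(\mby_1\g\mbx)$---any distribution supported on $\arg\max_\mby r(\mbx,\mby)$---independent of $h$ and of $p(\mby_0\g\mbx)$.

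For the second claim, note that the vanilla win rate against an arbitrary anchor $q(\mby_0\g\mbx)$ is precisely the $h=\mathrm{id}$, anchor-$q$ instance of \Cref{eq:wro}. The same argument identifies its maximizer as $p^\star$, so $p^\star$ simultaneously achieves the maximum possible win rate against every anchor distribution.

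The main obstacle is just careful bookkeeping: the optimum is unique only up to the equivalence class of distributions supported on $\arg\max_\mby r(\mbx,\mby)$, so the phrase ``same optimal solution'' should be read in terms of this set (or under an assumption that $r(\mbx,\cdot)$ has a unique maximizer almost surely in $\mbx$). A related caveat worth flagging in the formal statement is that the result concerns the unconstrained WRO optimum; once a regularizer such as a KL-to-reference term is added, different $h$ generally yield different tempered solutions and the equivalence breaks.
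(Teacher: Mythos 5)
Your proposal is correct and follows essentially the same route as the paper: under the Bradley--Terry assumption both arguments identify the optimizer set of every WRO-$h$ objective, for any anchor, as the distributions supported on the maximum-reward responses (the paper's $\mathcal{P}^*_\text{reward}$), using monotonicity of $h\circ\sigma$ and linearity of the objective in $p_\theta(\mby_1\g\mbx)$. Your explicit argmax bookkeeping, the equivalence-class caveat, and the remark that regularization breaks the equivalence are all consistent with the paper's formal statement (realizability, finite rewards) and its treatment of the optimum as a set.
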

\vskip -.1in
See \Cref{sec:reward_max_proof} for proof. 

\subsection{Win Rate Optimization Games}
\label{sec:wro_games}
The choice of anchor need not be a fixed up front. Instead, the anchor distribution could also optimize for its own win rate, resulting in a family of WRO games:
\begin{align*}
    \max_\theta \mathbb{E}_{p(\mbx)} \mathbb{E}_{p_\theta(\mathbf{y}|\mbx)} \mathbb{E}_{p_\gamma(\mathbf{y}'|\mbx)}[h \cdot p(\ell=1|\mbx, \mathbf{y}', \mathbf{y})], \\
    \max_\gamma \mathbb{E}_{p(\mathbf{x})} \mathbb{E}_{p_\theta(\mathbf{y}|\mathbf{x})} \mathbb{E}_{p_\gamma(\mathbf{y}'|\mathbf{x})}[h \cdot p(\ell=1|\mathbf{x}, \mathbf{y}, \mathbf{y}')].
\end{align*}
Nash Learning from Human Feedback \cite{munos2023nash} is one instance of a WRO game with $h$=identity (and additional KL regularization); the choice of $h$ leads to an
antisymmetric, constant-sum game.
However, WRO games need not be anti-symmetric or constant sum; for instance, a WRO game with $h=\log$ is neither. Different choices of $h$ dictate a different prioritization of preference probabilities to target, and it is not obvious that $h$=identity, the specific setting considered in \citet{munos2023nash}, should be the only WRO game to consider. After all, aligning to preferences is not inherently a constant sum endeavor.

\subsection{Benefits of Win Rate Optimization}
\label{sec:wro_benefits}
Letting $\phi_h(\theta)$ denote the $h$-win rate, $\phi_h(p_\theta(\mby\g\mbx))$, two benefits of WRO objectives $\mathcal{O}_h(\theta)$ are:
\begin{enumerate}
    \itemsep0em
    \item \label{property-1} (Win rate-correspondence): Improving the objective improves $h$-win rate:
    \begin{align*}
        \mathcal{O}(\theta) > \mathcal{O}(\theta') \Rightarrow \phi(\theta) > \phi(\theta').
    \end{align*}
    \item \label{property-2} (Win rate-consistency): 
    The optimum of the objective maximizes $h$-win rate:
    \begin{align*}
        \text{For }\theta^* = \arg\max_\theta \mathcal{O}(\theta):
        \phi(\theta^*) = \max_\theta \phi(\theta).
    \end{align*}
\end{enumerate}
Win rate-correspondence makes gradient-based optimization practical, while win rate-consistency ensures that the loss itself does not exhibit fundamental limitations with respect to its optimal solution.\footnote{For readers interested in the precedence for these properties, win rate-correspondence and consistency are closely linked to Question 2 in \citep{Steinwart2007} and $\mathcal{H}$-consistency \citep{awasthi2021calibration} respectively.}
Both properties follow trivially  for WRO from the fact that $\mathcal{O} = \phi$. 
These properties need not hold for non-WRO (see \Cref{sec:not_wro}). 

\paragraph{Regularization.} 
As $p(\ell = 1 \g \mbx, \mby_0, \mby_1)$ is typically estimated from data,
to prevent overfitting to it one can regularize with 
with a divergence penalty $D(p_\theta,  p_\text{ref})$
with regularization parameter $\beta$:
\begin{equation*}\label{eq:wro_reg}
    \max_\theta h\text{-Win Rate}_{p(\mby_0 \g \mbx)}[p_\theta(\mby_1 \g \mbx)] - \beta \E_{p(\mbx)}D(p_\theta,  p_\text{ref}).
\end{equation*}
Options for $D$ include sequence-level reverse KL, chi-sq, or a sum of token-level divergences~\cite{azar2023general,  huang2024correctingmythosklregularizationdirect,rafailov2024from}.

Optimizing WRO with reverse KL regularization is equivalent to minimizing the reverse KL divergence between the model and the target distribution $p^*_\text{WRO-KL}(\mby \g \mbx) \propto p_\text{ref}(\mby \g \mbx)\exp(\frac{1}{\beta}\mathbb{E}_{p(\mby_0|\mbx)}[h \cdot p(\ell=1|\mbx, \mby_0, \mby)])$; see \citet{azar2023general} or \Cref{sec:wro-kl-target-derivation} for derivation. As such, 
the reverse-KL regularized WRO objective 
is a form of black-box variational inference \citep{ranganath2014black}.

Regularized-WRO $\mathcal{O}_{h, \beta}(\theta)$ does not satisfy win rate-correspondence nor consistency in general but modified versions of both: 1. (regularized win rate-correspondence): Improving the objective implies either an improvement in $h$-win rate or divergence to the reference, i.e., the regularization term; and 2. (regularized win rate-consistency): $\theta^* = \arg\max_\theta \mathcal{O}_{h,\beta}(\theta)$ maximizes $h$-win rate for all distributions with divergence within $M(\beta) = D(p_{\theta^*}, p_\text{ref})$.  

\section{Preference algorithms that are not WRO}
\label{sec:not_wro}

\subsection{DPO fails win rate-correspondence}
\label{sec:dpo}
Direct Preference Optimization (DPO) \citep{rafailov2024direct} shares the same target distribution as RLHF but employs a different objective to optimize for that target. Denoting $p_\theta(\ell\g\mbx,\mby_0,\mby_1)$ as $p_{\ell, \theta}$ and $q(\mbx,\mby_0,\mby_1)$ as some distribution of offline data, DPO is
\begin{equation}\label{eq:dpo_objective}
    \min_\theta \mathcal{L}_\text{DPO}(\theta) = 
    \min_\theta \minus\mathbb{E}_{q(\mbx,\mby_0,\mby_1)p(\ell\g\mbx,\mby_0,\mby_1)}[\log p_{\ell,\theta}],
\end{equation}
where $p_{\ell,\theta}$ is parametrized to include a language model inside of it. Namely, $p_{\ell,\theta}(\ell=1\g\mbx,\mby_0,\mby_1)$ is
\begin{equation}\label{eq:dpo_param}
        \sigma \Big[\beta \log \frac{p_\theta(\mby_1\g\mbx)}{p_\text{ref}(\mby_1\g\mbx)} - \beta \log \frac{p_\theta(\mby_0\g\mbx)}{p_\text{ref}(\mby_0\g\mbx)}\Big].
\end{equation}

DPO
fails win rate-correspondence, as better (i.e., lower) DPO loss does not necessarily mean better (i.e., higher) win rate. 
This correspondence fails for two reasons:
\begin{enumerate}
    \item Using offline data breaks prevalence-consistency. For instance, a model change that improves the ranking of responses $y_A, y_B$ in the offline data but moves more mass overall to generate dispreferred responses not covered in the offline data will improve DPO loss but hurt win rate. Even sampling from the initial model is not sufficient (see \Cref{sec:dpo_preferences} for an example), as  prevalence-consistency of the generator does not hold throughout training.
    \item Even if we satisfy prevalence-consistency by switching to a fully online setting\footnote{This would consist of sampling from the appropriate query, generator, and anchor distributions throughout training. \citet{guo2024directlanguagemodelalignment,calandriello2024humanalignmentlargelanguage} do this but use $\E_{\text{stop-gradient}(p_\theta(\mby\g\mbx))}$ instead of $\E_{p_\theta(\mby\g\mbx)}$ as a loss; as evaluations both are the same.}, 
    win rate correspondence still does not hold; see \Cref{fig:dpo_mismatch_main} for a visualization.
    This is because the likelihood of preference distribution as in \Cref{eq:dpo_objective} is not an $h$-win rate:
    $\minus\E_{p_\theta(\mbx, \mby_0, \mby_1)}[\E_{p_\ell}\log p_{\ell, \theta}] \neq \E_{p_\theta(\mbx, \mby_0, \mby_1)}[h \cdot p_\ell]$. Since prevalence-consistency holds, preference consistency of 
    this online objective must not hold, which can be seen by noting that the argument inside $\E_{p_\theta(\mbx, \mby_0, \mby_1)}$ is not a fixed function of 
    $p_\ell$.
\end{enumerate}
\begin{figure}[h]
    \centering
    \includegraphics[width=0.7\linewidth]{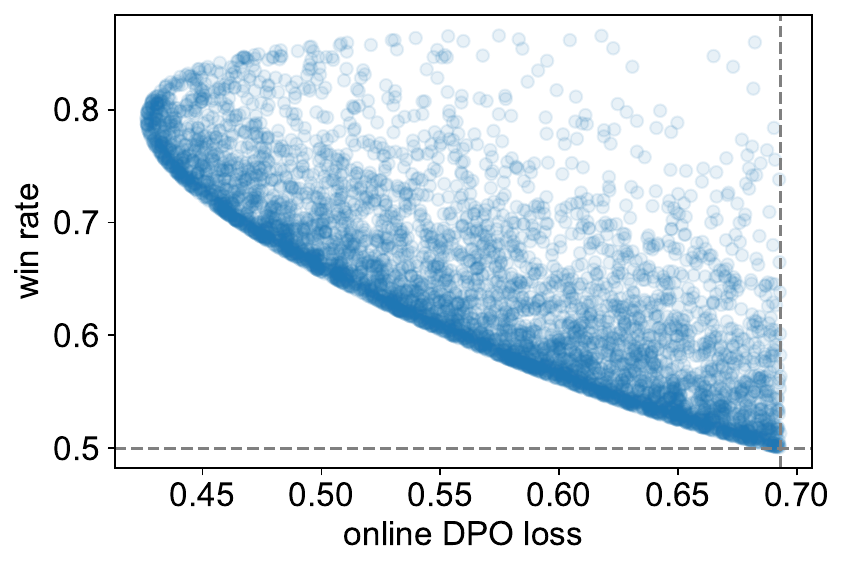}
    \caption{Plot of win rate vs. DPO loss for different $p_\theta(\mby\g\mbx)$ for a given choice of anchor and preference classifier (see \Cref{sec:dpo_mismatch} for details). Any pair of points that would form a line segment with a positive slope form a setting of $\theta, \theta'$ such that win rate correspondence does not hold. Grey lines denote the win rate and DPO loss at initialization. See \Cref{sec:dpo_mismatch} for additional discussion and examples.}
    \label{fig:dpo_mismatch_main}
\end{figure}

Note that even though DPO is derived to match the target distribution of a regularized-WRO objective,
DPO does not satisfy regularized win rate-correspondence either. Namely, it is possible to improve (i.e., decrease) the DPO loss without improving win rate or divergence to the reference model (see \Cref{sec:dpo_mismatch_reg} for example).

The insight that
DPO fails win rate-correspondence
offers an explanation for empirical results such as the loss vs. win rate misalignment observed in \citet{Chen2024PreferenceLA, rafailov2024scalinglawsrewardmodel}, as well as for why it can helpful to perform very early stopping (e.g., one epoch, far from any evidence of overfitting) rather than model selection with DPO validation loss \citep{rafailov2024direct}, or to choose checkpoints using win rate directly \citep{yuan2024self}. 

The above also applies to direct alignment algorithms (DAA) in general
\cite{azar2023general,pmlr-v235-tang24b,huang2024correctingmythosklregularizationdirect,ji2024towards}: the use of offline data means that prevalence-consistency does not hold, and an inner objective $f(\mbx,\mby_0,
\mby_1)$ in $\E_{q(\mbx,\mby_0,
\mby_1)}f(\mbx,\mby_0,
\mby_1)$ other than the preference classifier (including divergence to the true classifier, distance to some function of the true classifier) means that preference-consistency does not hold.

\paragraph{Takeaways.} Even though DPO and other DAA often share the same optimal solution as a regularized-WRO objective (i.e., regularized-win rate consistency holds), these alternative objectives fail win rate-correspondence, complicating their use in training and model selection.

\subsection{SFT fails win rate-correspondence and consistency}
\label{sec:sft}

Supervised finetuning (SFT)
is often viewed as an initial step for or necessary precursor to other preference learning algorithms \citep{Wang2023EnablingLM,Razin2023VanishingGI}, but it is also a preference learning algorithm itself based on filtering and maximum likelihood estimation.

Supervised finetuning on preferred samples
(sometimes denoted $\mby_w$) 
seeks to maximize the likelihood of sample $\mby_1$ when $\ell=1$ and $\mby_0$ when $\ell=0$. When $p(\mby_1\g\mbx) = p(\mby_0\g\mbx)$, then $p(\mby_w\g \mbx) = p(\mby_1 \g \mbx, \ell=1) = p(\mby_0 \g \mbx, \ell=0)$ and the SFT loss can be written as follows:
\begin{align*}
    \mathcal{L}_{\text{SFT}}(\theta) = \min_\theta \minus\mathbb{E}_{p(\mbx,\mby_1|\ell)} \log p_\theta(\mby_1 \g \mbx).
\end{align*}
Like DPO and direct alignment algorithms, this fails win rate-correspondence, as the above expression is not an $h$-win rate. 
What about win rate consistency?
The target distribution
can be written as
\vskip -.3in
\begin{equation}\label{eq:sft_target}
    p^*_\text{SFT}(\mby\g\mbx) \propto p(\mby\g\mbx)\mathbb{E}_{p(\mby_0\g\mbx)}[p(\ell=1\g\mbx,\mby_0,\mby)],
\end{equation}
which tilts the original distribution towards sequences with higher average preference probabilities over the anchor. However, there are limits to the amount of improvement \Cref{eq:sft_target} can achieve
(see \Cref{sec:target_dist_viz} for visualization).
In fact, we can characterize the exact win rate expected from the SFT target distribution
over the original model:
\begin{theorem}[Win rate improvement of SFT on preferred]\label{thm:sft_gap}
Let $p(\mby_0\g\mbx)$ be the initial generative model, and $p_\text{SFT}(\mby\g\mbx)$ be the target distribution of SFT on preferred samples ($p(\mby_1\g\mbx, \ell=1)$, $p(\mby_0\g\mbx) = p(\mby_1\g\mbx)$). Then, 
\begin{align}
&\text{Win Rate}_{p(\mby_0\g\mbx)}[p_\text{SFT}(\mby\g\mbx)] = 0.5 \nonumber \\
&+ 2 \mathbb{E}_{p(\mathbf{x})}\text{Var}_{p(\mby_1 \g \mbx)}\left[
\E_{p(\mby_0 \g \mbx)}\left[p(\ell = 1 \g \mbx, \mby_0, \mby_1)\right]\right].
\label{eq:sft_gap}
\end{align}
\end{theorem}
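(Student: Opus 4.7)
The plan is to reduce the win rate of the SFT target to a second moment of the average-preference function and then subtract its squared mean to obtain the variance.

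\textbf{Setup.} Define $f(\mbx, \mby) := \E_{p(\mby_0\g\mbx)}[p(\ell=1\g\mbx, \mby_0, \mby)]$, so \Cref{eq:sft_target} reads $p_\text{SFT}(\mby\g\mbx) \propto p(\mby\g\mbx) f(\mbx, \mby)$. First I would compute the normalizing constant $Z(\mbx) = \E_{p(\mby\g\mbx)}[f(\mbx,\mby)]$. Because (i) $p(\mby_0\g\mbx) = p(\mby_1\g\mbx) =: p(\mby\g\mbx)$ under the theorem's hypothesis and (ii) $p(\ell=0\g\mbx,\mby_0,\mby_1) = p(\ell=1\g\mbx,\mby_1,\mby_0)$ by the natural label-swap consistency of the preference classifier, relabeling the i.i.d.\ variables in the double expectation gives
\begin{equation*}
Z(\mbx) = \E_{p(\mby,\mby_0\g\mbx)}[p(\ell=1\g\mbx,\mby_0,\mby)] = \E_{p(\mby,\mby_0\g\mbx)}[p(\ell=0\g\mbx,\mby_0,\mby)] = \tfrac{1}{2}.
\end{equation*}
Thus $p_\text{SFT}(\mby\g\mbx) = 2\,p(\mby\g\mbx)\, f(\mbx,\mby)$.

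\textbf{Win rate computation.} Next I would plug $p_\text{SFT}$ into the win-rate formula from \Cref{prop:win_rate} with $h$ the identity:
\begin{equation*}
\text{Win Rate}_{p(\mby_0\g\mbx)}[p_\text{SFT}(\mby\g\mbx)]
= \E_{p(\mbx)}\E_{p_\text{SFT}(\mby\g\mbx)}[f(\mbx,\mby)]
= 2\,\E_{p(\mbx)}\E_{p(\mby\g\mbx)}[f(\mbx,\mby)^2].
\end{equation*}
Then I would introduce the variance by writing $\E_{p(\mby\g\mbx)}[f(\mbx,\mby)^2] = \text{Var}_{p(\mby\g\mbx)}[f(\mbx,\mby)] + Z(\mbx)^2 = \text{Var}_{p(\mby\g\mbx)}[f(\mbx,\mby)] + \tfrac{1}{4}$. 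Substituting yields
\begin{equation*}
\text{Win Rate}_{p(\mby_0\g\mbx)}[p_\text{SFT}(\mby\g\mbx)] = \tfrac{1}{2} + 2\,\E_{p(\mbx)}\text{Var}_{p(\mby_1\g\mbx)}\!\left[\E_{p(\mby_0\g\mbx)}[p(\ell=1\g\mbx,\mby_0,\mby_1)]\right],
\end{equation*}
which is exactly \Cref{eq:sft_gap}.

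\textbf{Expected obstacles.} The arithmetic is essentially one line once the normalizer is pinned down, so the only subtle step is justifying that $Z(\mbx) = 1/2$. This relies on two ingredients that are easy to state but worth flagging explicitly: the two candidate distributions must be literally identical (which is the theorem's hypothesis), and the preference classifier must be position-symmetric in the sense that swapping the two arguments also swaps the label probabilities. The latter is a standing modeling assumption in pairwise preference learning (it is what makes the labels well-defined regardless of which candidate is called $\mby_0$ vs.\ $\mby_1$), but I would state it explicitly at the top of the proof to avoid ambiguity. Everything else is direct substitution and the standard mean/variance identity.
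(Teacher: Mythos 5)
Your proof is correct and takes essentially the same route as the paper's: substitute the tilted SFT target, recognize the win rate as twice the second moment of the average-preference function $f(\mbx,\mby)=\E_{p(\mby_0\g\mbx)}[p(\ell=1\g\mbx,\mby_0,\mby)]$, and apply the mean--variance decomposition; the paper merely packages the substitution step as a more general lemma (allowing $p(\mby_0\g\mbx)\neq p(\mby_1\g\mbx)$, where the improvement is a variance over a normalizer) before specializing to the equal-distribution case. Your explicit appeal to label-swap symmetry of the preference classifier to pin down $Z(\mbx)=\tfrac{1}{2}$ is exactly the fact the paper uses implicitly when it plugs in $0.5$ for the base win rate and the normalizer, so flagging it is a clarification rather than a deviation.
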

The theorem states that
variance in the average preference probability under the initial model dictates the extent of win rate improvement. 
Intuitively, one should not expect any possible improvement in win rate if the existing model only outputs sequences which are equally preferred to each other; on the other hand, there is more room for improvement in win rate the more differentially preferred some of the model's sequences are to others. As long as the model has support over more than two sequences, however, the win rate cannot be maximized. See \Cref{sec:thm2_proof} for proof. 

We can generalize \Cref{thm:sft_gap} to other filtering strategies:

\begin{theorem}[Win rate improvement of filter + SFT]\label{thm:general_gap}
Let $p(f \g \mbx, \mby_1, \mby_0, \ell)$ be a filter that selects data ($f=1$) for supervised finetuning, and $p(\mby_0\g\mbx) = p(\mby_1\g\mbx)$). Then, denoting $\E_{p(\ell, \mby_0 \g \mbx)}\left[p(f = 1 \g \mbx, \mby_0, \mby_1, \ell)\right]$ as $\text{AvgFilter}(\mbx,\mby_1)$ and $\E_{p(\mby_0 \g \mbx)}\left[p(\ell = 1 \g \mbx, \mby_0, \mby_1)\right]$ as $\text{AvgPref}(\mbx,\mby_1)$,
\begin{align}
&\text{Win Rate}_{p(\mby_0\g\mbx)}[p_\text{SFT}(\mby\g\mbx)] = 0.5 \nonumber \\
&+ \mathbb{E}_{p(\mathbf{x})}\frac{\text{Cov}_{p(\mby_1 \g \mbx)}[\text{AvgFilter}(\mbx,\mby_1), \text{AvgPref}(\mbx,\mby_1)]}{\E_{p(\mby_1\g\mbx)}\text{AvgFilter}(\mbx,\mby_1)}.
\label{eq:sft_gap}
\end{align}
\end{theorem}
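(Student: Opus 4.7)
The plan is to reduce the statement to a direct computation in three steps: identify the SFT target distribution via Bayes' rule, substitute into the win-rate formula and decompose the resulting product expectation as covariance plus product of marginals, and finally use a symmetry argument to evaluate the product-of-marginals term at $1/2$.

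First, since SFT on filtered data fits $p(\mby_1\g\mbx, f=1)$, Bayes' rule gives
\begin{equation*}
p_\text{SFT}(\mby\g\mbx) \;=\; \frac{p(\mby\g\mbx)\,\text{AvgFilter}(\mbx,\mby)}{\E_{p(\mby_1\g\mbx)}\text{AvgFilter}(\mbx,\mby_1)}.
\end{equation*}
The piece of bookkeeping here is that, under the premise $p(\mby_0\g\mbx) = p(\mby_1\g\mbx)$ with $\mby_0 \indep \mby_1 \g \mbx$, marginalizing $(\mby_0,\ell)$ out of the filter yields $p(f=1\g\mbx,\mby_1) = \text{AvgFilter}(\mbx,\mby_1)$ as defined.

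Next, I would substitute this target into the win-rate expression guaranteed by \Cref{prop:win_rate} with identity $h$:
\begin{equation*}
\text{Win Rate}_{p(\mby_0\g\mbx)}[p_\text{SFT}] \;=\; \E_{p(\mbx)}\frac{\E_{p(\mby_1\g\mbx)}[\text{AvgFilter}(\mbx,\mby_1)\,\text{AvgPref}(\mbx,\mby_1)]}{\E_{p(\mby_1\g\mbx)}\text{AvgFilter}(\mbx,\mby_1)}.
\end{equation*}
Applying $\E[XY] = \text{Cov}(X,Y) + \E[X]\E[Y]$ pointwise in $\mbx$ with $X = \text{AvgFilter}(\mbx,\cdot)$ and $Y = \text{AvgPref}(\mbx,\cdot)$ splits the numerator: the covariance piece divided by the normalizer matches the claimed expression exactly, and the remainder is an additive $\E_{p(\mby_1\g\mbx)}\text{AvgPref}(\mbx,\mby_1)$.

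It then suffices to show $\E_{p(\mby_1\g\mbx)}\text{AvgPref}(\mbx,\mby_1) = 1/2$. Letting $A := \E_{p(\mby_1\g\mbx)}\E_{p(\mby_0\g\mbx)}[p(\ell=1\g\mbx,\mby_0,\mby_1)]$, exchangeability of $\mby_0$ and $\mby_1$ (same marginal, independent given $\mbx$) together with the swap-symmetry $p(\ell=1\g\mbx,\mby_0,\mby_1) = 1 - p(\ell=1\g\mbx,\mby_1,\mby_0)$ of the preference label gives $A = 1 - A$ by relabeling, hence $A = 1/2$. This is essentially the argument underlying \Cref{thm:sft_gap} with a filter inserted; I expect the only potential snag to be notational, namely keeping $\text{AvgFilter}(\mbx,\mby_1)$ correctly identified with $p(f=1\g\mbx,\mby_1)$ and using the exchangeability premise to drop the conditioning on $\mby_1$ inside that expectation.
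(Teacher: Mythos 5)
Your proof is correct and follows essentially the same route as the paper's: the paper proves a general lemma by writing the filtered target $p(\mby_1\g\mbx, f=1)$ via Bayes' rule, substituting into the win-rate integral, and splitting $\E[XY]$ into covariance plus product of means, then specializes to $p(\mby_0\g\mbx)=p(\mby_1\g\mbx)$ so the product-of-means term becomes the self--win rate $0.5$. The only difference is that you spell out the exchangeability/swap-symmetry argument for why that term equals $1/2$, which the paper leaves implicit, and this is a valid (indeed clarifying) addition since it identifies the label-symmetry assumption $p(\ell=1\g\mbx,\mby_0,\mby_1)=1-p(\ell=1\g\mbx,\mby_1,\mby_0)$ being used.
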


See \Cref{sec:thm2_proof} for proof. Doing SFT on preferred sets the filter as
$p(f=1 \g \mbx, \mby_1, \mby_0, \ell) = \mathbf{1}[\ell=1]$. 
Win rate increases as 
the denominator of the second term, measuring how often a response is selected,
decreases and/or 
the numerator, measuring how closely the average filter and preference co-vary increases.
In other words, both the threshold for filtering as well as the diversity (in preference)\footnote{Recall that $\text{Cov}_{p(\mby_1 \g \mbx)}[\text{AvgFilter}(\mbx,\mby_1), \text{AvgPref}(\mbx,\mby_1)] \leq \sqrt{\text{Var}_{p(\mby_1 \g \mbx)}[\text{AvgFilter}(\mbx,\mby_1)]\text{Var}_{p(\mby_1 \g \mbx)}[\text{AvgPref}(\mbx,\mby_1)]}$.} of the initial model matter for improvement possible in a filter + SFT algorithm.
This could explain the success of works such as \citet{li2023self}, who use backtranslation to generate a larger set of responses than the model's original generations and filter a subset for SFT, and \citet{brandfonbrener2021quantile}, who filter based on quantile.

\paragraph{Takeaways.} SFT on preferred samples fails win rate-correspondence and consistency. 
The improvement possible 
is a function of the initial model's diversity in preference. Other filtering strategies with stricter criteria can yield further win rate improvements.

\section{Investigating the empirical impact of different design choices}
\label{sec:experiments}

We next compare 
WRO and non-WRO
experimentally to complement the above theoretical analysis with an empirical one. First, we compare SFT, DPO, and RLHF on equal footing based on how much they are able to improve win rate over the starting model.
Moreover, we also compare different variants of KL-regularized WRO,
varying $h$ (identity, log, and logit), $\beta$ (1, 0.1, 0.01, 0.001), and the estimation of the preference classifier (perfect and estimated with and without the Bradley Terry assumption). 

\subsection{Experimental setup}
\label{sec:experimental_setup}

We employ Pythia-2.8b \citep{Biderman2023PythiaAS} as our base model
and the OpenAssistant (OASST) \citep{Kopf2023OpenAssistantC}
and Anthropic Helpfulness and Harmlessness (HH) \citep{bai2022training} as datasets.
We train the base model on the data outputs and use these finetuned models as our initial models.
We train an oracle judge model per dataset to estimate $p(\ell=1 \g \mbx, \mby_0, \mby_1)$ and relabel the preference annotations using this judge model. We use the same oracle to evaluate win rate after training. See \Cref{sec:judge} for further details on the judge model. We additionally train 1. a reward model on the oracle-labeled judge annotations (accuracy is 82.8 for OASST
and 81.36 for HH) and 2. an imperfect judge model (accuracy is 80.47 for OASST and 85.16 for HH). As expected, the BT assumption is helpful for preference classifier estimation in OASST but not in HH, as the OASST directly abides by this assumption (outputs are globally ranked) whereas HH does not explicitly. 
See \Cref{sec:training_appendix} for additional training and evaluation details.

\paragraph{Implementing WRO-KL.}
Variants of WRO-KL can be implemented via different $\psi$ 
in $\max_\theta \mathbb{E}_{p(\mbx)}\mathbb{E}_{p_\theta(\mby\g\mbx)}[\psi(\mbx, \mby) - \beta \log p_\theta(\mby\g\mbx) + \beta\log p_\text{ref}(\mby\g\mbx)]$, e.g., $\psi(\mbx, \mby) = \E_{p(\mby_0\g\mbx)}[p(\ell=1\g\mbx,\mby_0,\mby)]$. We approximate expectations over generator distributions with a single-sample Monte-Carlo estimate. Concretely, for a given query we sample one response each from the current and original model to compute $\psi$, which is either a function of 1. the preference probability under the judge model which takes in the pair as input, or 2. the BT preference probability under the reward model which takes each sequence in as input separately. The one exception is WRO-KL-logit-BT (i.e., RLHF), where we drop $\mathbb{E}_{p(\mby_0|\mbx)}[r(\mbx, \mby_0)]$ and only optimize $r(\mbx, \mby)$ (see \Cref{sec:rlhf_variance} for a comparison between keeping and dropping this term). For optimization, we use the PPO algorithm from the TRL library \citep{vonwerra2022trl}. 

\begin{figure}
    \centering
    \begin{subfigure}{.45\linewidth}
        \includegraphics[width=\linewidth,trim={0 -5mm 0 -1cm},clip]{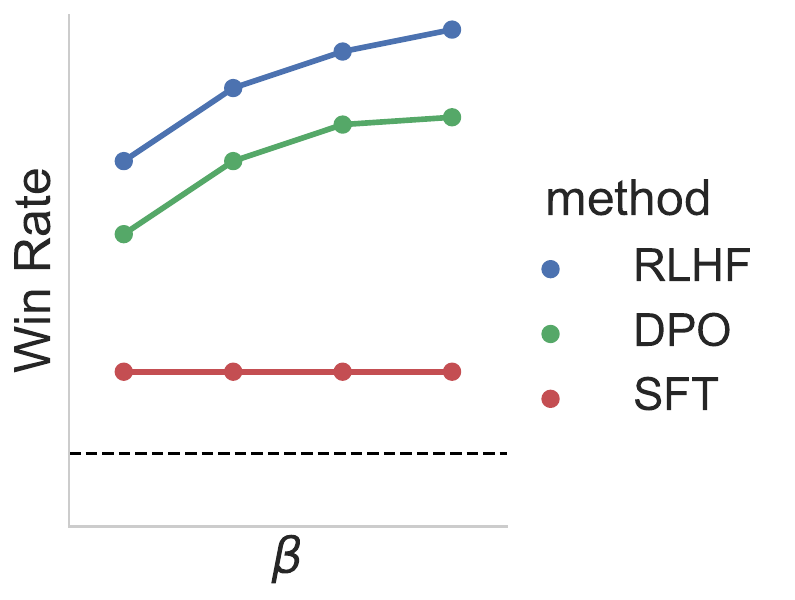}
        \caption{Expected win rates}
    \end{subfigure}
    \begin{subfigure}{.54\linewidth}
        \includegraphics[width=\linewidth,trim={0 0 0 -1cm}]{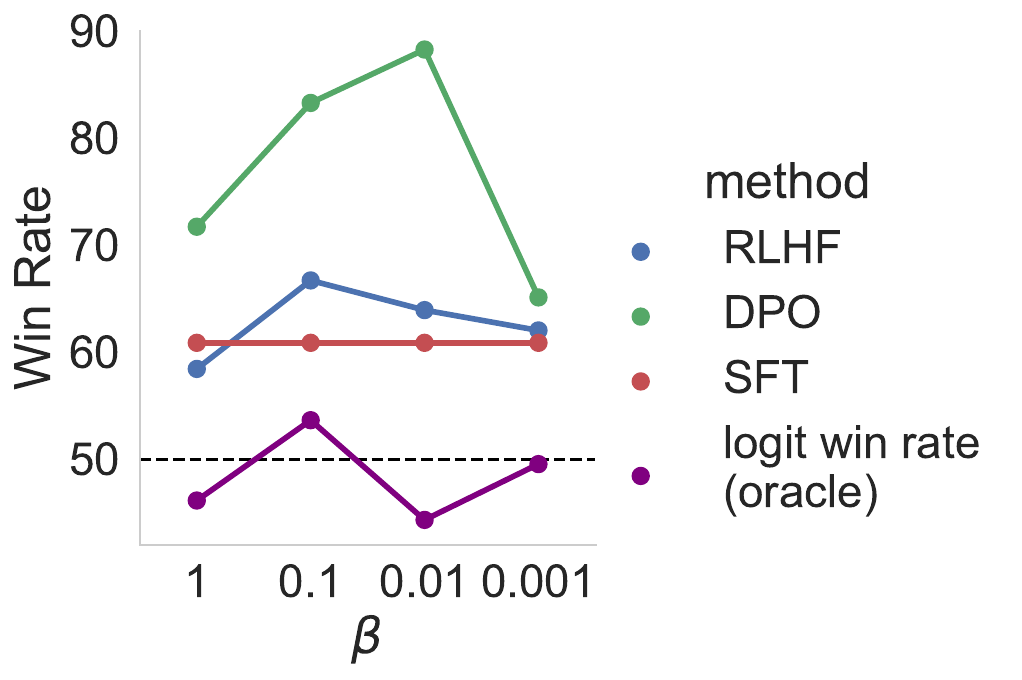}
        \caption{Actual win rates}
    \end{subfigure}
    \caption{Expected versus observed win rates of RLHF, DPO, and SFT over the original model for OASST. RLHF notably underperforms relative to expectations. Moreover, substituting the learned reward model for the oracle preference classifier (logit win rate (oracle)) does not improve performance, suggesting other factors are more important.}
    \label{fig:comparison}
    \vskip -.1in
\end{figure}
\subsection{Results: Comparing RLHF, DPO, and SFT}
\Cref{fig:comparison} compares methods.
While SFT performance aligns with expectations, as does DPO performance with more regularization, RLHF substantially underperforms relative to expected given the aforementioned analysis, as does DPO with less regularization. The non-monotonic nature of both point to the influence of factors beyond target distribution for win rate improvements. To test if RLHF underperformance is due to error from using an estimated reward model instead of the true preference classifier, we additionally run a WRO-logit variant with oracle preference classifier, but this performs even worse, suggesting another factor at play.
\subsection{Results: Comparing WRO-KL variants}
\label{sec:results}

\Cref{tab:wro_results} compares different WRO-KL objectives with different choices of $h$, $\beta$, and the estimation of the classifier. Notably, no WRO-KL method (i.e., choice of $h$) outperforms the others systematically across settings. Moreover, the settings that correspond to better target distributions (using a perfect preference classifier, low $\beta$) do not necessarily yield better win rates empirically. These results suggest that there is a more important consideration than the target distribution implied by the objective, namely the success of optimization. Indeed, we see that training loss correlates more with test win rate across WRO-KL than any of the target distribution design choices $\hat{p}_\ell, h, \text{ or } \beta$: the p-value of Spearman rank correlation test is 8.27e-5 for train loss vs. win rate, compared to 0.968
and 0.133 for $\hat{p}_\ell
\text{ and } \beta$, and 0.54/0.87/0.87 for all pairs of $h$ choices under Kolmogorov-Smirnov test.
\Cref{fig:win_rate_vs_loss} plots train loss vs. test win rate for all WRO-KL variants. Interestingly, even though one would expect the losses for different WRO variants to differ in scale under different choices in $h$ and $\beta$, there still exists a surprisingly noticeable global trend of better loss corresponding to better win rate.
\begin{table}[]
\caption{Win rate of WRO-KL variants over the reference model. Row corresponding to RLHF is shaded grey. No choice of $\hat{p}_\ell$, $h$, or $\beta$ systematically outperforms all others.}
    \centering
    \resizebox{\columnwidth}{!}{%
\begin{tabular}{lllllll}
    \toprule
          Dataset & $\hat{p}_\ell$ & $h$ &                                    $\beta=$ 0.001 &                                    $\beta=$ 0.01 &                                     $\beta=$ 0.1 &                                       $\beta=$ 1 \\
    \midrule
    HH & non-BT & log &       \cellcolor{salmonred} 38.36 (1.71) &       \cellcolor{lightpink} 46.38 (1.34) &  \cellcolor{lightmintgreen} 63.94 (0.94) &       \cellcolor{mintgreen} 67.00 (0.83) \\
          &         & logit &       \cellcolor{mintgreen} 69.15 (0.91) &       \cellcolor{peachpink} 41.98 (1.45) &       \cellcolor{lightpink} 48.74 (0.97) &       \cellcolor{mintgreen} 69.15 (0.93) \\
          &         & identity &       \cellcolor{mintgreen} 69.94 (0.99) &       \cellcolor{mintgreen} 69.94 (0.99) &       \cellcolor{mintgreen} 65.92 (0.95) &       \cellcolor{peachpink} 43.21 (0.54) \\
          & BT & log &       \cellcolor{salmonred} 35.41 (1.98) &       \cellcolor{salmonred} 33.80 (1.59) &       \cellcolor{peachpink} 41.76 (1.49) &       \cellcolor{lightpink} 49.49 (0.60) \\
          \cellcolor{mylightestgray}  &  \cellcolor{mylightestgray}  & \cellcolor{mylightestgray}logit &       \cellcolor{mintgreen} 70.46 (0.87) &  \cellcolor{lightmintgreen} 63.93 (1.24) &       \cellcolor{mintgreen} 69.44 (0.90) &       \cellcolor{palegreen} 54.60 (0.67) \\
          &         & identity &       \cellcolor{mintgreen} 65.71 (0.95) &  \cellcolor{lightmintgreen} 64.59 (0.95) &       \cellcolor{mintgreen} 69.94 (0.99) &       \cellcolor{palegreen} 51.72 (0.59) \\
          & oracle & log &       \cellcolor{salmonred} 34.98 (1.74) &       \cellcolor{palegreen} 52.74 (0.26) &       \cellcolor{lightpink} 47.66 (1.09) &       \cellcolor{peachpink} 43.29 (0.65) \\
          &         & logit &       \cellcolor{peachpink} 41.99 (1.65) &       \cellcolor{softgreen} 55.13 (1.07) &       \cellcolor{palegreen} 50.28 (1.21) &       \cellcolor{lightpink} 48.55 (0.44) \\
          &         & identity &       \cellcolor{mintgreen} 65.34 (0.93) &       \cellcolor{mintgreen} 66.42 (1.01) &       \cellcolor{mintgreen} 68.39 (0.94) &       \cellcolor{lightpink} 45.14 (0.44) \\
   \hline
    OASST & non-BT & log &       \cellcolor{softgreen} 59.51 (3.88) &       \cellcolor{softgreen} 59.02 (3.96) &  \cellcolor{lightmintgreen} 61.98 (1.40) &       \cellcolor{softgreen} 58.63 (1.98) \\
          &         & logit &  \cellcolor{lightmintgreen} 61.98 (3.53) &       \cellcolor{palegreen} 54.15 (3.20) &       \cellcolor{mintgreen} 65.14 (1.20) &       \cellcolor{lightpink} 48.69 (1.33) \\
          &         & identity &       \cellcolor{softgreen} 56.65 (3.26) &       \cellcolor{palegreen} 54.20 (2.28) &  \cellcolor{lightmintgreen} 64.90 (1.28) &       \cellcolor{palegreen} 53.64 (2.10) \\
          & BT & log &  \cellcolor{lightmintgreen} 63.10 (3.48) &  \cellcolor{lightmintgreen} 62.80 (3.55) &  \cellcolor{lightmintgreen} 60.94 (3.46) &       \cellcolor{palegreen} 54.64 (1.26) \\
          \cellcolor{mylightestgray}& \cellcolor{mylightestgray} & \cellcolor{mylightestgray}logit &  \cellcolor{lightmintgreen} 62.05 (3.65) &  \cellcolor{lightmintgreen} 63.95 (3.38) &       \cellcolor{mintgreen} 66.72 (1.16) &       \cellcolor{softgreen} 58.45 (1.82) \\
          &         & identity &  \cellcolor{lightmintgreen} 62.00 (3.55) &       \cellcolor{palegreen} 51.98 (2.96) &       \cellcolor{palegreen} 54.29 (2.16) &       \cellcolor{palegreen} 50.42 (1.38) \\
          & oracle & log &       \cellcolor{softgreen} 59.32 (3.79) &  \cellcolor{lightmintgreen} 60.19 (3.69) &       \cellcolor{softgreen} 56.09 (1.65) &  \cellcolor{lightmintgreen} 60.09 (1.21) \\
          &         & logit &       \cellcolor{lightpink} 49.55 (3.06) &       \cellcolor{peachpink} 44.34 (2.61) &       \cellcolor{palegreen} 53.66 (2.87) &       \cellcolor{lightpink} 46.15 (1.55) \\
          &         & identity &       \cellcolor{palegreen} 52.13 (2.47) &       \cellcolor{mintgreen} 66.69 (1.29) &       \cellcolor{mintgreen} 66.21 (1.22) &       \cellcolor{palegreen} 52.31 (1.25) \\
    \bottomrule
\end{tabular}
}
    \label{tab:wro_results}
\end{table}

\begin{figure}[t]
    \vskip -.2in
    \centering
\begin{subfigure}[t]{.495\linewidth}
    \includegraphics[width=\linewidth, trim={0 0 0 -1cm}, clip]{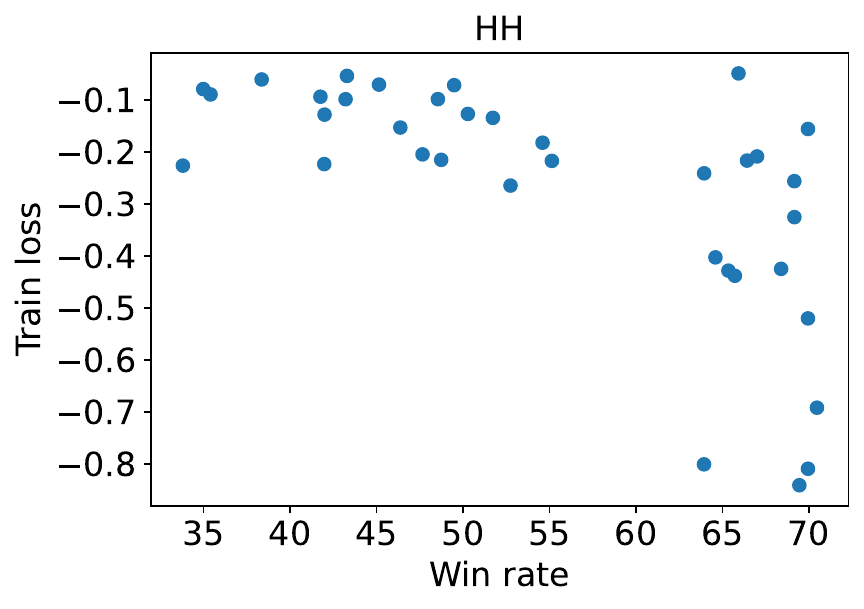}
\end{subfigure}
\begin{subfigure}[t]{.495\linewidth}
    \includegraphics[width=\linewidth, trim={0 0 0 -1cm}, clip]{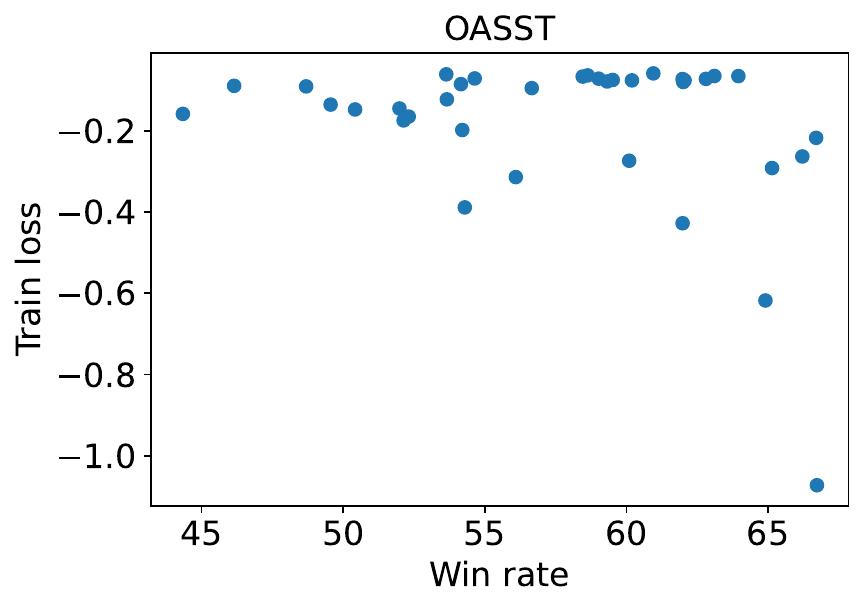}
\end{subfigure}
    \caption{Train loss vs. test win rate across all settings tested in \Cref{tab:wro_results}. 
    }
    \label{fig:win_rate_vs_loss}
    \vskip -.2in
\end{figure}

\section{Discussion}
\label{sec:discussion}
Amidst the complex landscape of aligning language models to human preferences, this work offers a simplifying insight:
win rate is the only evaluation that can matter based on preference data alone, and thus all of preference learning should be understood in relation to it (e.g., how well a given method optimizes for win rate, the role of additional assumptions). This work takes the first steps to do that (see \Cref{sec:summary} for summary).
High level takeaways from the analysis are that WRO is the theoretical ideal (as existing non-WRO fail win rate-correspondence or consistency), but optimization difficulties can lead to its underperformance in practice. Thus, optimization is a first-order factor in the practical success of WRO, while achieving / improving win rate-correspondence and consistency is important for advancing non-WRO.

\paragraph{What does this mean for current practice?} The importance and difficulty of optimization for WRO suggests that strategies to improve optimization success could help, e.g., multiple seeds. The lack of win rate-correspondence in DPO/DAA clarifies the importance of checkpointing with metrics other than loss. And SFT's dependence on variance (in preference) suggests that increasing preference diversity of generations to annotate could yield gains.

\paragraph{What's the value of so many methods?} 
Interestingly, theory and experiments yield opposing trends: based on win rate-correspondence and consistency, RLHF $>$ DPO $>$ SFT; but 
based on ease of optimization, 
SFT $>$ DPO $>$ RLHF.\footnote{SFT $>$ DPO given failures at low $\beta$ and DPO training issues mentioned in \citet{pal2024smaugfixingfailuremodes,Chen2024PreferenceLA}}
The fact that
no existing method is optimal with respect to both the objective being optimized and ease of optimization suggests why it might be useful to combine methods (to take advantage of different strengths) and develop new ones (to strike a better overall balance).

\paragraph{What's next for preference learning?} 
Our analysis suggests that the most important improvements in preference learning will likely fall under the umbrella of moving closer to directly optimizing for win rate, either in the objective itself (i.e., coming up with better surrogates for win rate optimization) or the practical optimization of it (e.g., beyond curent policy gradient approaches). How might we make progress? For one, the connection between RLHF / WRO-KL objectives and variational inference
suggests the possibility of drawing upon the rich field of probabilistic inference to improve preference learning,
from variance reduction techniques \citep{mohamed2020montecarlogradientestimation} to alternative optimization objectives and algorithms altogether \citep{naesseth2021markovianscoreclimbingvariational}. Moreover, the strategies and theoretical tools used to understand and develop surrogate objectives for other machine learning tasks \cite{bridle1990probabilistic, Steinwart2007, Awasthi2022MultiClassH} could provide a useful starting point for developing better objectives for preference learning.

Several concrete directions appear in \cref{sec:framework} and \cref{sec:not_wro} including non-constant sum WRO games to reflect the fact that aligning to huamn preferences need not be a constant-sum endeavor; fully online versions of direct alignment algorithms or contrastive post-training algorithms to help satisfy prevalence consistency (and fewer degrees of freedom for win rate correspondence failures); and alternative filter-plus-SFT style algorithms that take diversity and filter strength into account for solutions with better win rates.
There are probably many more promising ideas to explore---as long as win rate remains the guide. 
\section*{Impact Statement}
This paper presents work whose goal is to advance methods for aligning generative models (especially language models) to human preferences. There are many potential societal consequences of this work; perhaps the most critical is the question of whose preferences are being represented and how.

\section*{Acknowledgements}
This work was partly supported by the NIH/NHLBI Award R01HL148248, NSF Award 1922658 NRT-HDR:
FUTURE Foundations, Translation, and Responsibility for Data Science, NSF CAREER Award 2145542,
ONR N00014-23-1-2634,  NSF Award 2404476, Apple, Google, and Meta.
\bibliography{example_paper}
\bibliographystyle{icml2025}
\newpage
\onecolumn
\appendix
\section{Summary of theoretical contributions}
\label{sec:summary}
Below in \Cref{tab:summary}, we present a table summarizing the win rate-centric framework introduced in this work, as well as the original source for each insight within the overall framework:
\begin{table}[H]
    \centering
    \begin{tabular}{l|llll}
         & WRO membership & Generalization & Win rate-correspondence & Win rate-consistency \\
         \hline
        RLHF & \checkmark \cite{azar2023general} & $h$-win rate \cite{azar2023general} & \checkmark (\Cref{sec:wro_benefits})\textsuperscript{\textdagger} & \checkmark (\Cref{sec:wro_benefits})\textsuperscript{\textdagger} \\
        NLHF & \checkmark \cite{munos2023nash} &$h$-win rate games (\Cref{sec:wro_games}) & \checkmark (\Cref{sec:wro_benefits})\textsuperscript{\textdagger} & \checkmark (\Cref{sec:wro_benefits})\textsuperscript{\textdagger} \\
        DPO & \xmark (\Cref{sec:dpo}) & direct alignment algorithms (*) & \xmark (\Cref{sec:dpo}) & \checkmark \cite{rafailov2024direct}\textsuperscript{\textdaggerdbl}\\
        SFT & \xmark (\Cref{sec:sft}) & best-of-n, RAFT, ReST (*) & \xmark (\Cref{sec:sft}) & \xmark (\Cref{sec:sft})
    \end{tabular}
    \caption{Summary of the win-rate centric framework introduced in this work. (*) means that the given insight can be attributed to multiple works. Works which generalize DPO and propose alternative direct alignment algorithms: \citet{azar2023general,huang2024correctingmythosklregularizationdirect,pmlr-v235-tang24b,han2024f}. Works that generalize SFT on the preferred sample to more samples include \citet{dong2023raft,gulcehre2023reinforced}. \textsuperscript{\textdagger}RLHF and NLHF typically include a regularization term and satisfy regularized win rate-correspondence and consistency. Without regularization, win rate-correspondence and consistency are satisfied. \textsuperscript{\textdaggerdbl}DPO only satisfies regularized win rate-consistency ($\beta$ must be greater than zero for a meaningful objective) and was constructed specifically with this goal in mind.}
    \label{tab:summary}
\end{table}

\section{Proposition 3.3 proof}
\label{sec:win_rate_proof}
First we show a lemma that breaks down prevalance consistency into a sum over atoms.
\begin{lemma}
Let $\beta$ be a functional of a distribution $p$ with support over countable sample space ${\cal X}$, where if $p = \alpha p_1 + (1 - \alpha) p_2$, the functional satisfies $\beta(p) = \alpha \beta(p_1) + (1 - \alpha) \beta(p_2)$. Then,  \[ \beta(p) = \sum_{\mbx' \in {\cal X}} p(\mbx') \beta(\mathbbm{1}[\mbx = \mbx']).\]
\end{lemma}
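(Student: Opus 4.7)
The plan is to reduce the general statement to the case of point masses via iterated application of the affine/mixture property, first for finite support by induction on the support size, and then for countable support by a truncation-and-limit argument.

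For finite support, I would induct on $n = |\mathrm{supp}(p)|$. The base case $n=1$ is immediate: $p = \mathbbm{1}[\mbx = \mbx']$ for some $\mbx'$, and the right-hand sum collapses to $\beta(\mathbbm{1}[\mbx = \mbx'])$. For the inductive step, given $p$ supported on $\{\mbx_1, \ldots, \mbx_{n+1}\}$ with $p(\mbx_1) < 1$, define the conditional $q(\mbx_i) = p(\mbx_i)/(1 - p(\mbx_1))$ for $i \ge 2$, so that $p = p(\mbx_1)\,\mathbbm{1}[\mbx = \mbx_1] + (1 - p(\mbx_1))\,q$. Applying the affine hypothesis of the lemma yields
\begin{equation*}
\beta(p) = p(\mbx_1)\,\beta(\mathbbm{1}[\mbx = \mbx_1]) + (1 - p(\mbx_1))\,\beta(q),
\end{equation*}
and the inductive hypothesis on $q$ (whose support has size $n$) combined with $(1 - p(\mbx_1))\,q(\mbx_i) = p(\mbx_i)$ for $i \ge 2$ gives the claim. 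The case $p(\mbx_1) = 1$ reduces to the base case.

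For countable support, enumerate $\mathrm{supp}(p) = \{\mbx_1, \mbx_2, \ldots\}$ and set $\alpha_n = \sum_{i \le n} p(\mbx_i)$. Write $p = \alpha_n p_n + (1 - \alpha_n) q_n$, where $p_n$ is the normalized restriction of $p$ to $\{\mbx_1,\ldots,\mbx_n\}$ and $q_n$ is the normalized restriction to the tail (handling $\alpha_n \in \{0,1\}$ as edge cases). Affinity plus the finite-support result gives
\begin{equation*}
\beta(p) = \sum_{i=1}^{n} p(\mbx_i)\,\beta(\mathbbm{1}[\mbx = \mbx_i]) + (1 - \alpha_n)\,\beta(q_n).
\end{equation*}
Since $\alpha_n \to 1$, taking $n \to \infty$ yields the claim provided the remainder term vanishes.

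The main obstacle is precisely this last step: the lemma as stated only assumes affinity, so a priori $\beta(q_n)$ could grow without bound and the remainder need not vanish. In the intended application $\beta$ comes from an expectation of a bounded quantity (the preference probability $p(\ell=1 \mid \cdot)$ times a fixed $h$), so $\beta$ is uniformly bounded over all distributions and $(1-\alpha_n)\beta(q_n) \to 0$ trivially. I would therefore either state the lemma with an implicit boundedness assumption on $\beta$, or note explicitly that in the use cases downstream (\Cref{prop:win_rate} and the prevalence-consistency conditions) the relevant functionals are affine combinations of a bounded integrand, so the limit argument goes through and the countable-sum representation holds.
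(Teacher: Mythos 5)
Your induction for the finite-support case is essentially the paper's own argument: the paper likewise inducts on the support size, splitting off a single atom $\mba$ and writing $p = p(\mbx\in{\cal A})\,p(\mbx\g\mbx\in{\cal A}) + (1-p(\mbx\in{\cal A}))\,p(\mbx\g\mbx\in{\cal A}^c)$, then applying affinity and the inductive hypothesis to the complement (the only cosmetic difference is that you start from $n=1$ while the paper takes $n=2$ as the base case directly from the assumption). Where you genuinely go beyond the paper is the countably infinite case: the paper's proof is an induction over finite support sizes only, so as written it establishes the identity for finitely supported $p$, even though the lemma is stated for countable ${\cal X}$. Your truncation argument, and in particular your observation that binary affinity alone does not force the tail term $(1-\alpha_n)\beta(q_n)$ to vanish, is correct---passing from finite to countable mixtures requires some additional regularity such as uniform boundedness of $\beta$ or a continuity/convergence assumption (the paper only acknowledges this kind of caveat for the continuous-density extension, after the proof of Proposition 3.3). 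Your proposed fix is reasonable for the downstream use, with the caveat that boundedness of the integrand $h\cdot p(\ell=1\g\mbx,\mby_0,\mby_1)$ is not automatic for every strictly increasing $h$ (e.g.\ $h=\mathrm{logit}$ is unbounded near $0$ and $1$), so the cleanest statement is to add an explicit boundedness or continuity hypothesis on $\beta$ when the support is infinite.
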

\begin{proof}
We prove this by induction on $n$ the size of the support of ${\cal X}$. The base case for $n=2$ is true by assumption, now assume the statement is true for support size of $n$, then we show it is true for $n+1$.

Let ${\cal X}$ be a support set of size $n + 1$.
Let ${\cal A}$ be a set containing an element $\mba$ in ${\cal X}$, where $\alpha = p(\mbx \in {\cal A})$. Then, $p = \alpha p(\mbx \g \mbx \in {\cal A}) + (1 - \alpha) p(\mbx \g \mbx \in {\cal A}^c)$ and 
\begin{align*}
\beta(p) &= p(\mbx \in {\cal A}) \beta(p(\mbx \g \mbx \in {\cal A})) +  (1 -  p(\mbx \in {\cal A})) \beta(p(\mbx \g \mbx \in {\cal A}^c)) 
\\
&= p(\mbx = \mba) \beta(p(\mbx \g \mbx = \mba) +  (1 -  p(\mbx \in {\cal A})) \beta(p(\mbx \g \mbx \in {\cal A}^c))
\\
&= p(\mbx = \mba) \beta(\mathbbm{1}[\mbx = \mba]) +  (1 -  p(\mbx \in {\cal A})) \beta(p(\mbx \g \mbx \in {\cal A}^c))
\\
&= p(\mbx = \mba) \beta(\mathbbm{1}[\mbx = \mba]) +  (1 -  p(\mbx \in {\cal A})) \sum_{\mbx' \in {\cal A}^c} p(\mbx = \mbx' \g \mbx \in {\cal A}^c) \beta(\mathbbm{1}[\mbx = \mbx'])), \quad \text{${\cal A}^c$ is of size $n$}
\\
&= p(\mbx = \mba) \beta(\mathbbm{1}[\mbx = \mba]) +  \sum_{\mbx' \in {\cal A}^c} p(\mbx = \mbx',  \mbx \in {\cal A}^c) \beta(\mathbbm{1}[\mbx = \mbx']))
\\
&= p(\mbx = \mba) \beta(\mathbbm{1}[\mbx = \mba]) +  \sum_{\mbx' \in {\cal A}^c} p(\mbx = \mbx') \beta(\mathbbm{1}[\mbx = \mbx']))
\\
&= \sum_{\mbx' \in {\cal X}} p(\mbx') \beta(\mathbbm{1}[\mbx = \mbx']).
\end{align*}
This completes the induction step.
\end{proof}

\paragraph{Proposition 3.3.}
\textit{   
    For discrete $p$, an evaluation $\phi$ is grounded, as defined in \Cref{def:grounded}, if and only if for every   generator $p(\mby_1 \g \mbx)$, anchor $p(\mby_0 \g \mbx)$ in query-preference environment $(p(\mbx), p(\ell\g\mbx, \mby_0, \mby_1))$
    \begin{align}
        \phi_{p(\mby_0 \g \mbx)}(p(\mby\g \mbx), \mathcal{E}) = \E_{p(\mbx)}\E_{p(\mby \g \mbx)}\E_{p(\mby_0 \g \mbx)}[h \cdot p(\ell = 1 \g \mbx, \mby_0, \mby)]
    \end{align}
}
\begin{proof}
    For the forward direction, we can check both conditions on $\E_{p(\mbx)}\E_{p(\mby_1 \g \mbx)}\E_{p(\mby_0 \g \mbx)}[h \cdot p(\ell = 1 \g \mbx, \mby_0, \mby_1)]$:
    \begin{itemize}
        \item Preference consistency holds because if all distributions are point masses, the expectation becomes an evaluation at those points
        \item Prevalence consistency holds by the general statement $\E_{\mbz \sim aq_1 + (1 - a)q_2}(f(\mbz)) = a\E_{q_1}[f(\mbz)] + (1-a)\E_{q_2}[f(\mbz)]$.
    \end{itemize}
    For the reverse, we need to show that if an evaluation $\phi$ meets preference and prevalence consistency, it can be written as $\phi_{p(\mby_0 \g \mbx)}(p(\mby_1\g \mbx), \mathcal{E}) = \E_{p(\mbx)}\E_{p(\mby_1 \g \mbx)}\E_{p(\mby_0 \g \mbx)}[h \cdot p(\ell = 1 \g \mbx, \mby_0, \mby_1)]$ for some $h$.

    By preference consistency for all triples $\mbx, \mby_1, \mby_0$ obtained for generator $p(\mby_1 \g \mbx)$, anchor $p(\mby_0 \g \mbx)$, and query-preference environment $(p(\mbx), p(\ell\g\mbx, \mby_0, \mby_1))$, the following holds for some $h$:
    \begin{align*}
    \phi_{\mathbbm{1}[\mby_0 = \mby_0']}(\mathbbm{1}[\mby_1 = \mby_1'], (\mathbbm{1}[\mbx = \mbx'], p_\ell)) = h \cdot p(\ell = 1 \g \mbx', \mby_0', \mby_1')
    \end{align*}
    Then, by prevalence consistency with $\mathbb{X}$ as the support of $p(\mbx)$, $\mathbb{Y}_0^\mbx$ as the support of $p(\mby_0 \g \mbx)$, and  $\mathbb{Y}_1^\mbx$ as the support of $p(\mby_1 \g \mbx)$,
    \begin{align*}
        &\phi_{p(\mby_0 \g \mbx)}(p(\mby_1\g \mbx), \mathcal{E}) = \sum_{\mbx' \in \mathbb{X}} \phi_{p(\mby_0 \g \mbx)}(p(\mby_1\g \mbx), (\mathbbm{1}[\mbx = \mbx'], p_\ell)) p(\mbx')
        \\
        &= \sum_{\mbx' \in \mathbb{X}} \sum_{\mby_0' \in \mathbb{Y}_0^{\mbx'}} \phi_{\mathbbm{1}[\mby_0 = \mby_0']}(p(\mby_1\g \mbx), (\mathbbm{1}[\mbx = \mbx'], p_\ell)) p(\mby'_0 \g \mbx') p(\mbx')
        \\
        &= \sum_{\mbx' \in \mathbb{X}} \sum_{\mby_0' \in \mathbb{Y}_0^{\mbx'}} \sum_{\mby_1' \in \mathbb{Y}_1^{\mbx'}} \phi_{\mathbbm{1}[\mby_0 = \mby_0']}(\mathbbm{1}[\mby_1 = \mby_1'], (\mathbbm{1}[\mbx = \mbx'], p_\ell)) p(\mby'_1 \g \mbx') p(\mby'_0 \g \mbx') p(\mbx')
        \\
        &= \sum_{\mbx' \in \mathbb{X}} \sum_{\mby_0' \in \mathbb{Y}_0^{\mbx'}} \sum_{\mby_1' \in \mathbb{Y}_1^{\mbx'}} (h \cdot p(\ell =1 \g \mbx', \mby_0', \mby_1')) p(\mby'_1 \g \mbx') p(\mby'_0 \g \mbx') p(\mbx'), \quad \text{since preference consistency holds for all atoms}
        \\
        &= \E_{p(\mbx)}\E_{p(\mby \g \mbx)}\E_{p(\mby_0 \g \mbx)}[h \cdot p(\ell = 1 \g \mbx, \mby_0, \mby)]. 
    \end{align*}        
\end{proof}
To generalize the results to a continuous density, one would need to additionally assume some notion of convergence for $\phi$ as well to use weak convergence from a discrete measure.

\section{Proposition 4.1 Proof}
\label{sec:reward_max_proof}

We provide both the informal statement in the main paper as well as its formal version.
\paragraph{Proposition 4.1.} (informal)
    \textit{(informal) Under the Bradley-Terry assumption, all objectives in \Cref{eq:wro} with strictly increasing $h$ share the same optimal solution. This solution maximizes win rate overall all possible anchor distributions.} 

\paragraph{Proposition 4.1.} (formal)
    \textit{Denote by $\mathcal{P}^*_{h}$ the set of distributions $p(\mby\g\mbx)$ that optimize the WRO-$h$ objective in \Cref{eq:wro}. Assume a hypothesis class induced by $\theta \in \Theta$ such that all optima are realizable.
    Then, for a given anchor distribution $p(\mby_0\g\mbx)$, 
    $\mathcal{P}^*_{h} =  \mathcal{P}^*_{h'}$ for any strictly increasing $h$ under the Bradley-Terry assumption with finite rewards. }

\begin{proof}
We first introduce the $\mathcal{P}^*_\text{reward}$, the set of all distributions $p(\mby\g\mbx)$ which for each $\mbx$ place all their probability mass over only the highest-reward sequences or some subset of them. Then, we show that 
$\mathcal{P}^*_\text{reward} = \mathcal{P}^*_h$ for any strictly increasing $h$ and any anchor distribution $p(\mby_0\g\mbx)$. In other words, any maximum-reward distribution $p^*_\text{reward}(\mby\g\mbx) \in \mathcal{P}^*_\text{reward}$ maximizes any WRO objective and vice versa:
\begin{align}
    &\E_{p(\mbx)}\E_{p^*_\text{reward}(\mby\g\mbx)}\E_{p(\mby_0 \g \mbx)}[h \circ  p(\ell = 1 \g \mbx, \mby_0, \mby)] \\
    =&\E_{p(\mbx)}\E_{p^*_\text{reward}(\mby\g\mbx)}\E_{p(\mby_0 \g \mbx)}[h \circ \sigma \left( r(\mbx,\mby) - r(\mbx, \mby_0)\right)] \\
    = &\max_\theta \E_{p(\mbx)}\E_{p_\theta(\mby_1 \g \mbx)}\E_{p(\mby_0 \g \mbx)}[h \circ \sigma \left( r(\mbx,\mby_1) - r(\mbx, \mby_0)\right)] \\
    =&\max_\theta \E_{p(\mbx)}\E_{p_\theta(\mby_1 \g \mbx)}\E_{p(\mby_0 \g \mbx)}[h \circ p(\ell = 1 \g \mbx, \mby_0, \mby_1)].
\end{align}
It follows that $\mathcal{P}^*_h = \mathcal{P}^*_{h'}$ for any strictly increasing $h$.
\end{proof}
\section{Target distribution derivations}
\label{sec:derivations}

\subsection{WRO-KL}
\label{sec:wro-kl-target-derivation}
Here, we show that the WRO-KL objective is equivalent to minimizing the reverse KL divergence of the model and following target distribution:
\begin{align}
    p^*_\text{WRO-KL}(\mby \g \mbx) &\propto p_\text{ref}(\mby \g \mbx)\exp(\frac{1}{\beta}\mathbb{E}_{p(\mby_0|\mbx)}[h \cdot p(\ell=1|\mbx, \mby_0, \mby)]).
\end{align}
Derivation:
\begin{align}
    &\max_\theta \E_{p(\mbx)}\big[\E_{p_\theta(\mby_1 \g \mbx)}\E_{p(\mby_0 \g \mbx)}[h \cdot p(\ell = 1 \g \mbx, \mby_0, \mby_1)] - \beta\text{KL}(p_\theta(\mby \g \mbx) \parallel  p_\text{ref}(\mby \g \mbx))\big] \\
    = &\min_\theta \minus\E_{p(\mbx)}\E_{p_\theta(\mby_1 \g \mbx)}\Bigg[\E_{p(\mby_0 \g \mbx)}[h \cdot p(\ell = 1 \g \mbx, \mby_0, \mby_1)]-\beta\log \frac{p_\theta(\mby \g \mbx)}{p_\text{ref}(\mby \g \mbx)}\Bigg] \\
    = &\min_\theta \E_{p(\mbx)}\E_{p_\theta(\mby_1 \g \mbx)}\Bigg[\log \frac{p_\theta(\mby \g \mbx)}{p_\text{ref}(\mby \g \mbx)}-\frac{1}{\beta}\E_{p(\mby_0 \g \mbx)}[h \cdot p(\ell = 1 \g \mbx, \mby_0, \mby_1)]\Bigg] \\
    = &\min_\theta \E_{p(\mbx)}\E_{p_\theta(\mby_1 \g \mbx)}\Bigg[\log \frac{p_\theta(\mby \g \mbx)}{p_\text{ref}(\mby \g \mbx)\exp(\frac{1}{\beta}\E_{p(\mby_0 \g \mbx)}[h \cdot p(\ell = 1 \g \mbx, \mby_0, \mby_1)])}\Bigg] \\
    = &\min_\theta \E_{p(\mbx)}\E_{p_\theta(\mby_1 \g \mbx)}\Bigg[\log \frac{p_\theta(\mby \g \mbx)}{\frac{1}{Z(\mbx)}p_\text{ref}(\mby \g \mbx)\exp(\frac{1}{\beta}\E_{p(\mby_0 \g \mbx)}[h \cdot p(\ell = 1 \g \mbx, \mby_0, \mby_1)])} - \log Z(\mbx)\Bigg] \\
    = &\min_\theta \mathbb{E}_{p(\mbx)}\big[\text{KL}(p_\theta(\mby\g\mbx) \parallel  p^*_\text{WRO-KL}(\mby \g \mbx))\big], \\
    &p^*_\text{WRO-KL}(\mby \g \mbx) \propto p_\text{ref}(\mby \g \mbx)\exp\Big(\frac{1}{\beta}\mathbb{E}_{p(\mby_0|\mbx)}[h \cdot p(\ell=1|\mbx, \mby_0, \mby)]\Big).
\end{align}

\subsection{RLHF}
\label{sec:rlhf-target-derivation}
Here, we show that, under the BT assumption, the RLHF objective is equivalent to minimizing the reverse KL divergence with the following target distribution:
\begin{align}
      p^*_\text{RLHF}(\mby \g \mbx) &\propto p_\text{ref}(\mby \g \mbx)\exp(\frac{1}{\beta}\mathbb{E}_{p(\mby_0|\mbx)}[\text{logit } p(\ell=1|\mbx, \mby_0, \mby)]).
\end{align}
\begin{align}
&\max_{\theta} \mathbb{E}_{p(\mbx)} \big[\E_{p_\theta(\mby \g \mbx)}  [r(\mbx, \mby)] - \beta\text{KL}(p_\theta(\mby \g \mbx) \parallel  p_\text{ref}(\mby \g \mbx))\big] \\
=& \max_\theta \mathbb{E}_{p(\mbx)} \big[\E_{p_\theta(\mby \g \mbx)} \E_{p(\mby_0 \g \mbx)} [\text{logit }p(\ell=1\g\mbx,\mby_0, \mby)]- \beta\text{KL}(p_\theta(\mby \g \mbx) \parallel  p_\text{ref}(\mby \g \mbx))\big] \\
=& \min_\theta \mathbb{E}_{p(\mbx)} \E_{p_\theta(\mby \g \mbx)} \Bigg[\log\frac{(p_\theta(\mby \g \mbx)}{p_\text{ref}(\mby \g \mbx))} - \frac{1}{\beta}\E_{p(\mby_0 \g \mbx)} [\text{logit }p(\ell=1\g\mbx,\mby_0, \mby)]\Bigg] \\
=& \min_\theta \mathbb{E}_{p(\mbx)}\big[\text{KL}(p_\theta(\mby\g\mbx) \parallel  p^*_\text{RLHF}(\mby \g \mbx))\big],\\
&p^*_\text{RLHF}(\mby\g\mbx)\propto p(\mby\g\mbx)\exp\Big(\frac{1}{\beta} \mathbb{E}_{p(\mby_0|\mbx)}[\text{logit } p(\ell=1|\mbx, \mby_0, \mby)])\Big).   
\end{align}

\section{Comparing target distributions}
\label{sec:target_dist_viz}

In \Cref{fig:sharp}, we present a visualization of the target distributions of different preference learning objectives. Panel (a) shows an example preference environment defined by the true preference probabilities between responses (left) and initial starting model (right). Panels (b), (c), and (d) visualize the resulting target distribution of different objectives. In each, the figure on the left shows how the average preference probabilities $\E_{p(\mby_0\g\mbx)}[p(\ell=1\g\mbx,\mby_0,\mby_1]$ (red dots) are translated into the tilt applied to the starting model, i.e. $g(\mby,\mbx)$ in $p^*(\mby \g\mbx) \propto p(\mby \g\mbx)g(\mby,\mbx)$ (green dots). The figure on the right shows the optimal distribution under the objective. Notably, among the WRO-KL family, the choice of $h$ can make a substantial difference in the target distribution of the objective (panels b vs. c). Moreover, SFT is limited in how much mass it can put on the preferred sample A, as $g(\mby,\mbx)$ is only the average preference probabilities themselves (panel d).

\begin{figure}[t]
    \centering
    \begin{subfigure}{.48\textwidth}
        \includegraphics[width=\linewidth]{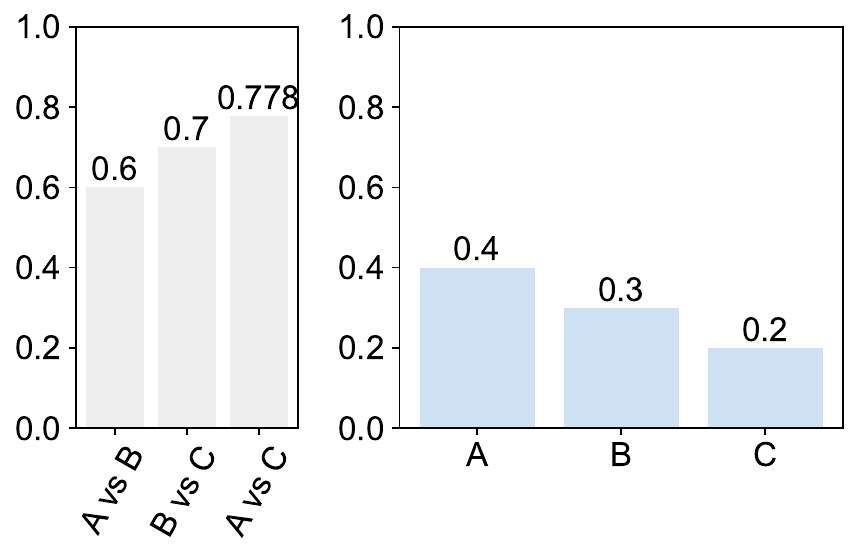}
        \caption{Initial generator and preference probabilities}
    \end{subfigure}
    \begin{subfigure}{.48\textwidth}
        \includegraphics[width=\linewidth]{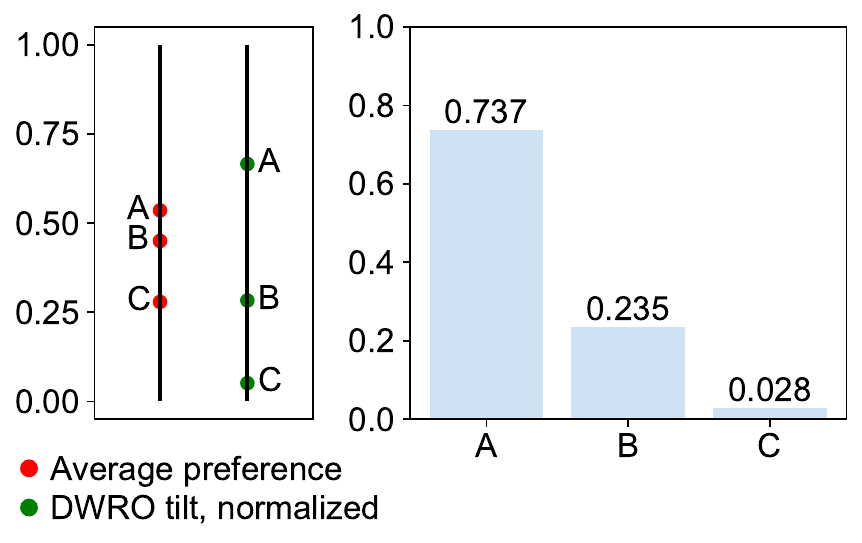}
        \caption{WRO-KL, $h$= identity, $\beta=0.1$}
    \end{subfigure}
    \begin{subfigure}{.48\textwidth}

    \end{subfigure}
    \begin{subfigure}{.48\textwidth}
        \includegraphics[width=\linewidth]{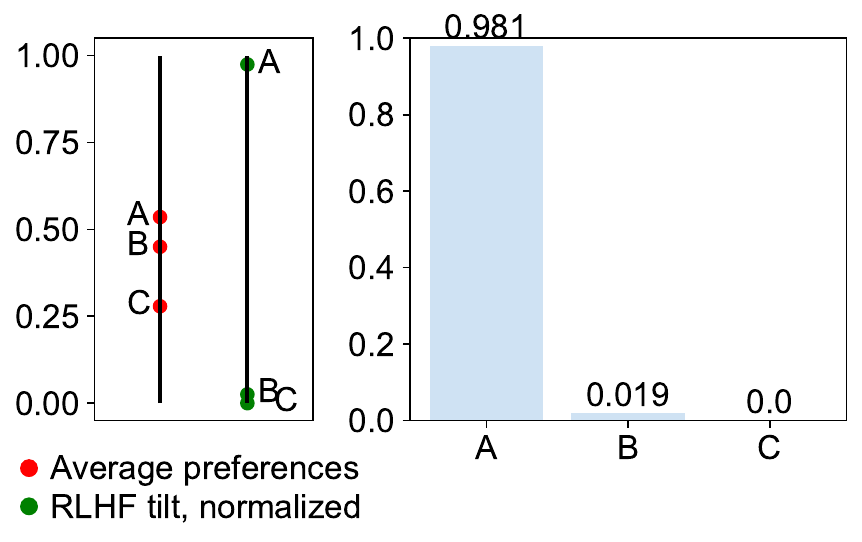}
        \caption{RLHF / DPO, $h$= logit, $\beta=0.1$}
    \end{subfigure}
    \begin{subfigure}{.48\textwidth}
        \includegraphics[width=\linewidth]{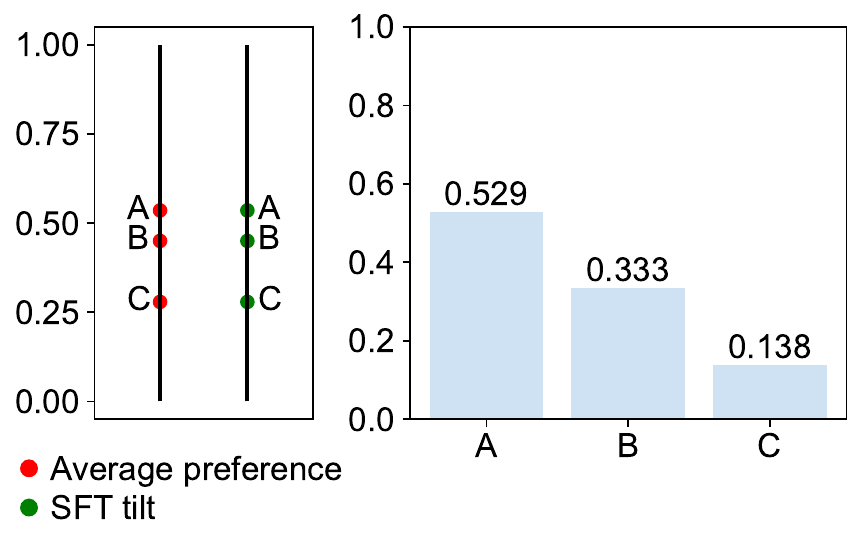}
        \caption{SFT on preferred samples}
    \end{subfigure}
    \caption{
    Different preference learning objectives have different target distributions.
    Consider the initial setting in (a). WRO-KL with $h$= identity, $\beta=0.1$ yields the target distribution depicted in (b). RLHF ($h$= logit, $\beta=0.1$) yields a sharper distribution given that average logit probabilities can differ more from each other than average probabilities in (c). SFT yields the least sharp distribution as it can only apply weights between 0 and 1 in (d).
    }
    \label{fig:sharp}
\end{figure}

\section{DPO win rate-correspondence counterexamples}
\label{sec:dpo_preferences}
\subsection{Example where data is sampled from original model}
Consider a query environment $p(\mbx) = \indicator{1}[\mbx=\mbx']$ and a sample space of three responses, $\mby_a, \mby_b, \mby_c$. Under the preference environment of interest, $\mby_a$ is preferred over $\mby_b$ and both are very preferred over $\mby_c$, i.e., $\mby_a \succ \mby_b \succ\succ\succ \mby_c$. Let starting anchor distribution $p_\text{ref}(\mby\g\mbx)$ be supported over $\mby_a$ and $\mby_b$ only, where $p_\text{ref}(\mby_a\g\mbx') > p_\text{ref}(\mby_b\g\mbx')$. The dataset is generated from the initial model itself. 

Consider a $p_\theta(\mby\g\mbx)$ that ranks $\mby_a$ and $\mby_b$ correctly but places most of its mass on $\mby_c$. Then, $\minus\mathcal{L}_{DPO}( p_\theta(\mby\g\mbx))$ will be high, higher than the $\mathcal{L}_\text{DPO}( p_\text{ref}(\mby\g\mbx))$, yet $p_\theta(\mby\g\mbx)$ will generally output a response that is dispreferred over one generated from $p_\text{ref}(\mby\g\mbx)$, i.e., $\mby_c$ vs. $\mby_a$ or $\mby_b$. Note that this example even considers online data (i.e., from the initial model) and no estimation error from finite samples; however, even so, the DPO objective does not respect prevalences from the generator itself, which can also be seen as a failing condition \ref{condition-2} \Cref{condition_2_generator}.

\subsection{Example where data is sampled in a fully online manner}
\label{sec:dpo_mismatch}
We now consider the DPO objective for each under a fully online setting, i.e., one response sampled from the anchor, another sampled from the generator, and the ability to obtain the oracle preference probability for every response pair. We show below that even then, DPO does not obey win rate-correspondence.

Consider a query environment $p(\mbx) = \indicator{1}[\mbx=\mbx']$ and a sample space of three responses, $\mby_a, \mby_b, \mby_c$, where $\mby_a \succ\succ\succ \mby_b \succ \mby_c$ under the preference environment. Let starting anchor distribution $p_\text{ref}(\mby\g\mbx)$ rank the responses as $\mby_b \succ \mby_c \succ\succ\succ \mby_a$, i.e., the rank ordering of $\mby_b$ and  $\mby_c$ is correct, but $\mby_a$ is very unlikely relative to them even though it is most preferred. 
In particular, assume a Bradley-Terry preference classifier that greatly prefers $\mby_a$ to $\mby_b$, i.e., $p(\ell=1\g\mbx,\mby_b,\mby_a) = .9$, and only mildly prefers $\mby_b$ to $\mby_c$, i.e., $p(\ell=1\g\mbx,\mby_c,\mby_b) = .6$. By the BT assumption, $p(\ell=1\g\mbx,\mby_c,\mby_a) \approx .93$. Let the original starting model assign probabilities .1, .5, and .4 respectively to $\mby_a, \mby_b, \mby_c$---notably, it's ranking for pairs with $\mby_a$ is very wrong but its ranking for $\mby_b$ vs. $\mby_c$ is correct. 

This is the setting used to generate the plot in \Cref{fig:dpo_mismatch_main} in the main paper.
We generate points on the plot by sampling $p(\theta(\mby\g\mbx)$ uniformly from the probability simplex. We only plot points whose loss improves upon the initial model (as those are the practical model instances to consider during training), and sample until the plot contains 5000 points.

We now provide a concrete example of $p_\theta(\mby\g\mbx)$ and $q_\theta(\mby\g\mbx)$ where $\minus \mathcal{L}_\text{DPO}(p_\theta) > \minus \mathcal{L}_\text{DPO}(q_\theta)$ yet $\phi(p_\theta) < \phi(q_\theta)$---in words, $p_\theta$ has a better DPO loss but worse win rate than $q_\theta$.
Namely, we construct $p_\theta$ such that it continues to increase the correct probability gap between $\mby_b$ and $\mby_c$ without adding probability mass to $\mby_a$, while we construct $q_\theta$ to improve the probability of $\mby_a$ but at the expense of a worse probability gap between $\mby_b$ and $\mby_c$. As a concrete example, set $p_\theta(\mby_a\g\mbx) = .1$, $p_\theta(\mby_b\g\mbx) = .6$, $p_\theta(\mby_c\g\mbx) = .3$, and  $q_\theta(\mby_a\g\mbx) = .8$, $q_\theta(\mby_b\g\mbx) = .001$, $q_\theta(\mby_c\g\mbx) = .199$. 
The online DPO loss for $p_\theta$ and $q_\theta$ is .51 and .78 respectively. The win rate over the original model, however, is .54 and .67 respectively. The logit-win rate (i.e., $h$=logit in \Cref{eq:win_rate}) is .26 and 1.7 respectively.

The intuition for large mismatch in this example comes from the differential impact to the DPO loss between improving the ranking of the initially incorrect $\mby_a$ vs. $\mby_b$ vs. the initially correct $\mby_b$ vs. $\mby_c$---due to the role of the initial model log probability margins on the loss itself---compared to the differential impact to the win rate between increasing the probability over $\mby_a$ versus $\mby_b$. 

However, this example is just one instance of the mismatch between DPO and win rate; there exist mismatched pairs even when the initial model is uniform over responses, as well as when the BT preference classifier maintains the same probability gap between neighboring examples in the rank order; see \Cref{fig:dpo_mismatch} for a visualization.

\begin{figure}
    \centering
    \begin{subfigure}[t]{.48\textwidth}
        \includegraphics[width=\linewidth,trim=0 0 -2cm 0, clip]{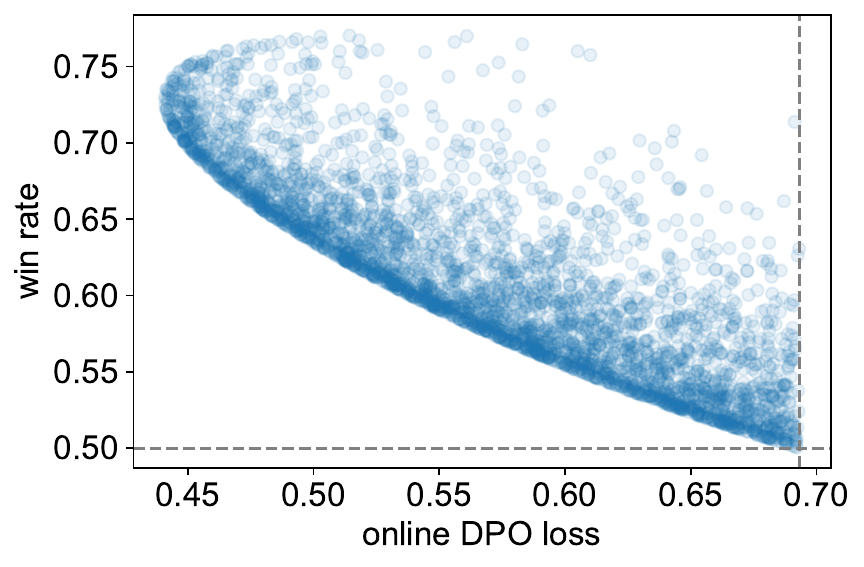}
        \caption{$p(\ell=1\g\mbx,\mby_b,\mby_a) = .9$, $p(\ell=1\g\mbx,\mby_c,\mby_b) = .6$,\\ $p(\ell=1\g\mbx,\mby_c,\mby_a) \approx .93$. \\$p_0(\mby_a\g\mbx) = p_0(\mby_b\g\mbx) = p_0(\mby_c\g\mbx) = 1/3$.}
    \end{subfigure}
    \begin{subfigure}[t]{.48\textwidth}
        \includegraphics[width=\linewidth,trim=0 0 -2cm 0, clip]{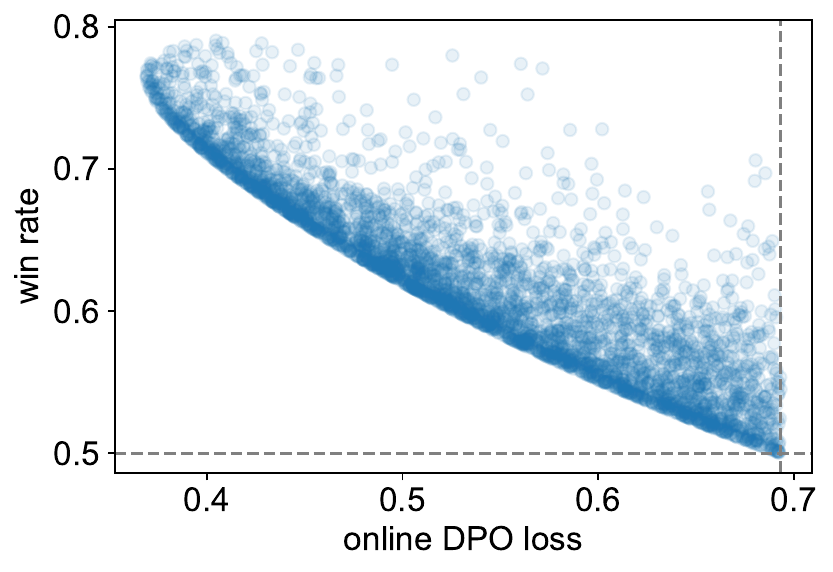}
        \caption{$p(\ell=1\g\mbx,\mby_b,\mby_a) = .9$, $p(\ell=1\g\mbx,\mby_c,\mby_b) = .9$, \\$p(\ell=1\g\mbx,\mby_c,\mby_a) \approx .99$. \\$p_0(\mby_a\g\mbx) = p_0(\mby_b\g\mbx) = p_0(\mby_c\g\mbx) = 1/3$.}
    \end{subfigure}
    \caption{Plot of win rate vs. DPO loss for different $p_\theta(\mby\g\mbx)$ under a given setting of Bradley-Terry preference classifier $p(\ell=1\g\mbx,\mby_0,\mby_1)$ and initial and anchor distribution $p_0(\mby\g\mbx)$. These plots are analogous to \Cref{fig:dpo_mismatch_main} but with different experimental settings. Each point on the plot is a different instantiation of $p_\theta(\mby\g\mbx)$, sampled from a Dirichlet with $\alpha=1$ uniformly over all three responses. $\beta=1$. Grey dotted lines are the values of the win rate (horizontal) and DPO loss (vertical) at initialization. Any two pairs of points that would form a line segment with a positive slope is a pair of generator settings $\theta, \theta'$ such that win rate-correspondence does not hold.}
    \label{fig:dpo_mismatch}
\end{figure}

The more fundamental issue is the fact that \Cref{eq:dpo_param} inside the DPO loss is meant to approximate the preference classifier, but this approximation is only correct when the model is exactly the target distribution for the objective. Then, even in the online setting where the expectations consider the correct distributions, the estimate of the preference classifier is incorrect, i.e., $p_\theta(\ell=1\g\mbx,\mby',\mby) \neq p(\ell=1\g\mbx,\mby',\mby)$, meaning the DPO objective is not the same as win rate under the preference environment of interest. Namely,
\begin{align*}
    \minus\mathcal{L}_\text{DPO}(p_\theta(\mby\g\mbx)) &= \mathbb{E}_{p(\mbx)p_\theta(\mby\g\mbx)p_0(\mby'\g\mbx)}[p(\ell=1\g\mbx,\mby',\mby) \log p_\theta(\ell=1\g\mbx,\mby',\mby) + p(\ell=0\g\mbx,\mby',\mby)\log p_\theta(\ell=0\g\mbx,\mby',\mby)] \\
    &\neq \mathbb{E}_{p(\mbx)p_\theta(\mby\g\mbx)p_0(\mby'\g\mbx)}[p(\ell=1\g\mbx,\mby',\mby)] = h\text{-Win Rate}_{p(\mby_0 \g \mbx)}[p_\theta(\mby_1 \g \mbx)].
\end{align*}

\subsection{Example that also fails regularized win rate-correspondence}
\label{sec:dpo_mismatch_reg}
Below is a counterexample to prove that DPO does not obey regularized win rate-correspondence, which is the property that an improvement in the objective implies win rate improvement or a decrease in the divergence to the reference model. 

Consider the setting described in the previous section: query distribution is deterministic with a single prompt $\mbx$, universe of responses to the query are $\{\mby_a,\mby_b,\mby_c\}$, preference classifier follows Bradley-Terry assumption with $p(\ell=1\g\mbx,\mby_a,\mby_b)=.9$ and $p(\ell=1\g\mbx,\mby_b,\mby_c)=.6$, and initial model $p_0(\mby\g\mbx)$ has probabilities .1, .5, and .4 over $\mby_a,\mby_b, \text{ and }\mby_c$ respectively. Let $p_\theta(\mby\g\mbx)$ be and .6, .07, and .33 over $\mby_a,\mby_b, \text{ and }\mby_c$, vs. $q_\theta(\mby\g\mbx)$ be .56, .43, and .01 over $\mby_a,\mby_b, \text{ and }\mby_c$. Then, $\mathcal{L}(p_\theta) \approx .59 < \mathcal{L}(q_\theta) \approx .64$, i.e., $p_\theta$ has better DPO loss. However, $p_\theta$ has both worse win rate and larger divergence from the initial model, with win rate and logit-win rate of .69 and 1.2 (vs. .70 and 1.3 for $q_\theta$) and reverse-KL divergence of .87 (vs. .86 for $q_\theta$). 

The intuition for this counterexample is that  $p_\theta$ has slightly more mass on the best response $\mby_a$ than $q_\theta$ does, but has  $q_\theta$ has much more mass on the second best $\mby_b$ than $p_\theta$, which places the remaining mass on $\mby_c$. In other words, $p_\theta$ and $q_\theta$ have different strengths. In this specific setting, $p_\theta$ gets better DPO loss whereas $q_\theta$ gets better win rate over the original model as the small difference in $\mby_a$ mass matters less than larger mass difference between $\mby_b$ and $\mby_c$. Ultimately, $q_\theta$ is slightly closer to the original model, which had most of its mass on $\mby_b$. 

The broader takeaway is that although DPO has the same target distribution as a regularized WRO objective, the loss is not the same as a linear combination of 
win rate and explicit regularization term, such that it is possible to improve the DPO objective in a way that does not improve win rate or the regularization.
\section{SFT proofs}
\label{sec:thm2_proof}
We provide results for SFT on the preferred sample in a pair, as well as SFT with arbitrary filtering based on a pair and its preference. 

We first provide a more general setting of the theorems in the main paper (\Cref{lemma:gap} and \Cref{lemma:general-gap}); then we specialize to the setting in the main paper, where $p(\mby_0\g\mbx)=p(\mby_1\g\mbx)$, i.e., both samples come from the same distribution.

\begin{lemma}\label{lemma:gap}
Let $p(\mby_0\g\mbx)$ be the initial generative model, and $p_\text{SFT}(\mby\g\mbx)$ be the target distribution of supervised finetuning on preferred samples $\mby_0, \mby_1 \sim p(\mby_0, \mby_1 \g \mbx) = p(\mby_0\g\mbx)p(\mby_1 \g \mbx)$. Then, 
\begin{align}
&\text{Win Rate}_{p(\mby_0\g\mbx)}[p_\text{SFT}(\mby\g\mbx)] =  \\
&\text{Win Rate}_{p(\mby_0\g\mbx)}[p(\mby_1\g\mbx)] + \int p(\mbx) \left[\frac{\text{Variance}_{p(\mby_1 \g \mbx)}\left[\int p(\mby_0 \g \mbx) p(\ell = 1 \g \mbx, \mby_0, \mby_1) d\mby_0)\right]}{\int p(\mby_1' \g \mbx) \int p(\ell = 1 | \mbx, \mby_0'', \mby_1') p(\mby_0 \g \mbx) d\mby_0'' d\mby_1'}
\right]d\mbx.
\end{align}
\end{lemma}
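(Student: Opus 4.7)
The plan is to carry out a direct computation by writing down the SFT target distribution explicitly via Bayes' rule, substituting into the win rate formula from \Cref{prop:win_rate} (with $h$ = identity), and recognizing that the resulting expression minus the baseline win rate of $p(\mby_1\g\mbx)$ collapses to a variance divided by a mean.

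First, I would introduce shorthand $g(\mbx,\mby_1) \coloneqq \int p(\mby_0\g\mbx) p(\ell=1\g\mbx,\mby_0,\mby_1)\, d\mby_0$, the average preference probability of $\mby_1$ against the anchor at query $\mbx$. Since samples are drawn independently $\mby_0,\mby_1 \sim p(\mby_0\g\mbx)p(\mby_1\g\mbx)$, the joint posterior given $\ell=1$ factors appropriately and the marginal SFT target is
\begin{equation*}
p_\text{SFT}(\mby_1\g\mbx) \;=\; p(\mby_1\g\mbx,\ell=1) \;=\; \frac{p(\mby_1\g\mbx)\, g(\mbx,\mby_1)}{\int p(\mby_1'\g\mbx)\, g(\mbx,\mby_1')\, d\mby_1'},
\end{equation*}
since $p(\ell=1\g\mbx,\mby_1) = g(\mbx,\mby_1)$ after marginalizing out $\mby_0$ under the independence assumption.

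Next, I would plug this into the win rate expression against $p(\mby_0\g\mbx)$: by \Cref{prop:win_rate} with $h$ = identity,
\begin{equation*}
\text{Win Rate}_{p(\mby_0\g\mbx)}[p_\text{SFT}(\mby\g\mbx)] = \int p(\mbx)\, \frac{\int p(\mby_1\g\mbx)\, g(\mbx,\mby_1)^2\, d\mby_1}{\int p(\mby_1'\g\mbx)\, g(\mbx,\mby_1')\, d\mby_1'}\, d\mbx,
\end{equation*}
where the numerator is obtained by multiplying $g(\mbx,\mby_1)$ (the inner expectation over $\mby_0$ against the anchor) by the unnormalized SFT density $p(\mby_1\g\mbx) g(\mbx,\mby_1)$ and integrating over $\mby_1$.

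Finally, I would subtract the baseline $\text{Win Rate}_{p(\mby_0\g\mbx)}[p(\mby_1\g\mbx)] = \int p(\mbx) \int p(\mby_1\g\mbx) g(\mbx,\mby_1)\, d\mby_1\, d\mbx$ from both sides. For each fixed $\mbx$, the difference becomes
\begin{equation*}
\frac{\E_{p(\mby_1\g\mbx)}[g(\mbx,\mby_1)^2] - \bigl(\E_{p(\mby_1\g\mbx)}[g(\mbx,\mby_1)]\bigr)^2}{\E_{p(\mby_1'\g\mbx)}[g(\mbx,\mby_1')]},
\end{equation*}
and the numerator is exactly $\text{Var}_{p(\mby_1\g\mbx)}[g(\mbx,\mby_1)]$ by definition, which yields the claimed formula after integrating over $p(\mbx)$. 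There is no real obstacle here beyond careful bookkeeping; the only subtle point is verifying that the factorization $p(\mby_0,\mby_1\g\mbx) = p(\mby_0\g\mbx)p(\mby_1\g\mbx)$ lets us cleanly identify $p(\ell=1\g\mbx,\mby_1)$ with $g(\mbx,\mby_1)$, which is what makes the second moment of $g$ appear in the numerator of the win rate of $p_\text{SFT}$.
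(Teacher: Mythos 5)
Your proposal is correct and follows essentially the same route as the paper's proof: substitute the tilted SFT target $p_\text{SFT}(\mby_1\g\mbx) \propto p(\mby_1\g\mbx)\,g(\mbx,\mby_1)$ into the win rate, obtain $\E_{p(\mby_1\g\mbx)}[g^2]/\E_{p(\mby_1\g\mbx)}[g]$ per query, and split the second moment into squared mean plus variance so the squared-mean term cancels against the denominator to give the baseline win rate. The paper performs this decomposition inside one chain of equalities rather than subtracting the baseline at the end, but the argument is identical.
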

\begin{proof}
    \begin{align}
&\text{Win Rate}_{p(\mby\g\mbx)}[p_\text{SFT}(\mby\g\mbx)] = \int p(\mbx) p(\mby_0 \g \mbx) p_\text{SFT}(\mby\g\mbx) p(\ell = 1 \g \mbx, \mby_0, \mby_1) \mby_0 \mby_1 d\mbx
\\
=&  \int p(\mbx) p(\mby_0 \g \mbx) 
\frac{p(\mby_1 \g \mbx) \int p(\ell = 1 | \mbx, \mby_0', \mby_1) p(\mby_0' \g \mbx) d\mby_0'}{\int p(\mby_1' \g \mbx) \int p(\ell = 1 | \mbx, \mby_0'', \mby_1') p(\mby_0'' \g \mbx) d\mby_0'' d\mby_1'}
p(\ell = 1 \g \mbx, \mby_0, \mby_1) d\mby_0 d\mby_1 d\mbx
\\
=&  \int p(\mbx) \left[p(\mby_0 \g \mbx) 
\frac{p(\mby_1 \g \mbx) \int p(\ell = 1 | \mbx, \mby_0', \mby_1) p(\mby_0' \g \mbx) d\mby_0'}{\int p(\mby_1' \g \mbx) \int p(\ell = 1 | \mbx, \mby_0'', \mby_1') p(\mby_0'' \g \mbx) d\mby_0'' d\mby_1'}
p(\ell = 1 \g \mbx, \mby_0, \mby_1) d\mby_0 d\mby_1\right] d\mbx
\\
=&  \int p(\mbx) \left[ \int p(\mby_0 \g \mbx) 
\frac{p(\mby_1 \g \mbx) \int p(\ell = 1 | \mbx, \mby_0', \mby_1) p(\mby_0' \g \mbx) d\mby_0'}{\int p(\mby_1' \g \mbx) \int p(\ell = 1 | \mbx, \mby_0'', \mby_1') p(\mby_0'' \g \mbx) d\mby_0'' d\mby_1'}
p(\ell = 1 \g \mbx, \mby_0, \mby_1) d\mby_0 d\mby_1\right] d\mbx
\\
=&  \int p(\mbx) \left[ \int
\frac{\int p(\mby_0 \g \mbx) p(\ell = 1 \g \mbx, \mby_0, \mby_1) d\mby_0  p(\mby_1 \g \mbx) \int p(\ell = 1 | \mbx, \mby_0', \mby_1) p(\mby_0' \g \mbx) d\mby_0'}{\int p(\mby_1' \g \mbx) \int p(\ell = 1 | \mbx, \mby_0'', \mby_1') p(\mby_0 \g \mbx) d\mby_0'' d\mby_1'}
d\mby_1\right] d\mbx
\\
=&  \int p(\mbx) \left[
\frac{ \int(\int p(\mby_0 \g \mbx) p(\ell = 1 \g \mbx, \mby_0, \mby_1) d\mby_0)^2  p(\mby_1 \g \mbx)d\mby_1}{\int p(\mby_1' \g \mbx) \int p(\ell = 1 | \mbx, \mby_0'', \mby_1') p(\mby_0 \g \mbx) d\mby_0'' d\mby_1'}
\right] d\mbx
\\
=& \int p(\mbx) \left[
\frac{ \left(\int(\int p(\mby_0 \g \mbx) p(\ell = 1 \g \mbx, \mby_0, \mby_1) d\mby_0)  p(\mby_1 \g \mbx)d\mby_1\right)^2}{\int p(\mby_1' \g \mbx) \int p(\ell = 1 | \mbx, \mby_0'', \mby_1') p(\mby_0 \g \mbx) d\mby_0'' d\mby_1'}
\right] d\mbx 
\\
&+ \int p(\mbx) \left[\frac{\text{Variance}_{p(\mby_1 \g \mbx)}\left[\int p(\mby_0 \g \mbx) p(\ell = 1 \g \mbx, \mby_0, \mby_1) d\mby_0)\right]}{\int p(\mby_1' \g \mbx) \int p(\ell = 1 | \mbx, \mby_0'', \mby_1') p(\mby_0 \g \mbx) d\mby_0'' d\mby_1'}
\right] d\mbx
\\
=& \int p(\mbx) (\int p(\mby_0 \g \mbx) p(\ell = 1 \g \mbx, \mby_0, \mby_1) d\mby_0)  p(\mby_1 \g \mbx)d\mby_1 d\mbx
\\
&+ \int p(\mbx) \left[\frac{\text{Variance}_{p(\mby_1 \g \mbx)}\left[\int p(\mby_0 \g \mbx) p(\ell = 1 \g \mbx, \mby_0, \mby_1) d\mby_0)\right]}{\int p(\mby_1' \g \mbx) \int p(\ell = 1 | \mbx, \mby_0'', \mby_1') p(\mby_0 \g \mbx) d\mby_0'' d\mby_1'}
\right] d\mbx \\
= &\text{Win Rate}_{p(\mby_0\g\mbx)}[p(\mby_1\g\mbx)] + \int p(\mbx) \left[\frac{\text{Variance}_{p(\mby_1 \g \mbx)}\left[\int p(\mby_0 \g \mbx) p(\ell = 1 \g \mbx, \mby_0, \mby_1) d\mby_0)\right]}{\int p(\mby_1' \g \mbx) \int p(\ell = 1 | \mbx, \mby_0'', \mby_1') p(\mby_0 \g \mbx) d\mby_0'' d\mby_1'}
\right]d\mbx.
\end{align}
\end{proof}

\begin{lemma}\label{lemma:general-gap}
    Let $p(\mby_0\g\mbx)$ be the initial generative model, and $p_\text{filter}(\mby\g\mbx)$ be the target distribution of supervised finetuning on samples where the filter $f=1$ is a potentially randomized data point filter $p(f \g \mbx, \mby_1, \mby_0, \ell)$ that selects data for supervised finetuning based on the any of the context observed $\mbx$, generations $\mby_1, \mby_0$ and  preference $\ell$.
    \begin{align*}
        p_\text{filter}(\mby\g\mbx) = p(\mby_1 \g \mbx, f = 1) = \frac{p(\mby_1 \g \mbx) p(f = 1 \g \mbx, \mby_1)}{p(f = 1 \g \mbx)} 
        = \frac{p(\mby_1 \g \mbx) \E_{p(\ell, \mby_0 \g \mbx, \mby_1)}[p(f = 1 \g \mbx, \mby_1,\mby_0, \ell)]}{\E_{p(\mby_1, \mby_0, \ell \g \mbx)}[ p(f = 1 \g \mbx, \mby_1,\mby_0, \ell)]}
    \end{align*}
     Then, 
    \begin{align*}
    &\text{Win Rate}_{p(\mby_0\g\mbx)}[p_\text{filter}(\mby\g\mbx)] =  \\
    &\text{Win Rate}_{p(\mby_0\g\mbx)}[p(\mby_1\g\mbx)] + \int p(\mbx) \left[ \frac{\text{Cov}_{p(\mby_1 \g \mbx)}(\E_{p(\ell,\mby_0 \g \mbx)}[p(f = 1 \g \mbx, \mby_0, \mby_1)], \E_{p(\mby_0' \g \mbx)}[p(\ell = 1 | \mbx, \mby_0', \mby_1)])}{\E_{p(\mby_1, \mby_0, \ell \g \mbx)}[p(f = 1 \g \mbx, \mby_1,\mby_0, \ell)]} \right] d\mbx.
    \end{align*}
    \end{lemma}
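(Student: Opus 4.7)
The plan is to mirror the derivation of Lemma \ref{lemma:gap}, replacing the implicit indicator filter $\indicator{\ell=1}$ with the arbitrary filter $p(f=1 \g \mbx, \mby_1, \mby_0, \ell)$. First I substitute the definition of $p_\text{filter}(\mby_1 \g \mbx)$ into the win rate integral
\[
\text{Win Rate}_{p(\mby_0\g\mbx)}[p_\text{filter}(\mby\g\mbx)] = \int p(\mbx)\, p(\mby_0 \g \mbx)\, p_\text{filter}(\mby_1 \g \mbx)\, p(\ell=1 \g \mbx, \mby_0, \mby_1)\, d\mby_0\, d\mby_1\, d\mbx,
\]
pull the $p(\mby_0 \g \mbx)$ integration inside to produce $\text{AvgPref}(\mbx, \mby_1)$, and observe that the numerator of $p_\text{filter}(\mby_1 \g \mbx)$ is by construction $p(\mby_1 \g \mbx)\,\text{AvgFilter}(\mbx, \mby_1)$ while the denominator equals $\E_{p(\mby_1 \g \mbx)}[\text{AvgFilter}(\mbx, \mby_1)]$ by the tower property (using $p(\mby_0, \ell \g \mbx, \mby_1) = p(\mby_0 \g \mbx)\, p(\ell \g \mbx, \mby_0, \mby_1)$, since $\mby_0 \perp \mby_1 \g \mbx$).

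Collecting terms yields
\[
\text{Win Rate}_{p(\mby_0\g\mbx)}[p_\text{filter}(\mby\g\mbx)] = \int p(\mbx)\, \frac{\E_{p(\mby_1 \g \mbx)}[\text{AvgFilter}(\mbx, \mby_1)\, \text{AvgPref}(\mbx, \mby_1)]}{\E_{p(\mby_1 \g \mbx)}[\text{AvgFilter}(\mbx, \mby_1)]}\, d\mbx.
\]
Applying $\E[XY] = \E[X]\E[Y] + \text{Cov}(X, Y)$ with $X = \text{AvgFilter}$ and $Y = \text{AvgPref}$ under $p(\mby_1 \g \mbx)$ splits the numerator into two pieces. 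The $\E[X]\E[Y]$ piece cancels the denominator, leaving $\E_{p(\mby_1 \g \mbx)}[\text{AvgPref}(\mbx, \mby_1)]$, which integrated against $p(\mbx)$ is exactly $\text{Win Rate}_{p(\mby_0 \g \mbx)}[p(\mby_1 \g \mbx)]$. The residual term is the claimed covariance ratio.

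The hard part will be bookkeeping rather than mathematical substance: tracking which measure each expectation and the covariance are taken with respect to (all conditional on $\mbx$, with $\mby_1$ as the free variable), and verifying that the filter's dependence on $\ell$ integrates away correctly via $p(\ell \g \mbx, \mby_0, \mby_1)$ since $\ell$ does not appear in the outer win rate expression. As a sanity check I would specialize to $p(f=1 \g \mbx, \mby_0, \mby_1, \ell) = \indicator{\ell=1}$, in which case $\text{AvgFilter} = \text{AvgPref}$, the covariance collapses to a variance, and the denominator reduces to $\E_{p(\mby_1 \g \mbx)}[\text{AvgPref}(\mbx, \mby_1)]$, recovering Lemma \ref{lemma:gap} exactly.
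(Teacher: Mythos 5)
Your proposal is correct and follows essentially the same route as the paper's proof: substitute the filtered target into the win-rate integral, integrate out $\mby_0$ to form $\text{AvgPref}$, rewrite the result as $\E_{p(\mby_1\g\mbx)}[\text{AvgFilter}\cdot\text{AvgPref}]$ over the normalizing constant, and split via $\E[XY]=\E[X]\E[Y]+\text{Cov}(X,Y)$ so the mean product cancels the denominator and leaves the base win rate plus the covariance ratio. Your added sanity check specializing to $f=\indicator{\ell=1}$ (recovering Lemma~\ref{lemma:gap}) is a nice touch but does not change the substance.
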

    \begin{proof}
        \begin{align*}
    &\text{Win Rate}_{p(\mby\g\mbx)}[p_\text{filter}(\mby\g\mbx)] = \int p(\mbx) p(\mby_0 \g \mbx) p_\text{filter}(\mby\g\mbx) p(\ell = 1 \g \mbx, \mby_0, \mby_1) \mby_0 \mby_1 d\mbx
    \\
    =&  \int p(\mbx) p(\mby_0 \g \mbx)
    \frac{p(\mby_1 \g \mbx) \E_{p(\ell, \mby_0 \g \mbx, \mby_1)}[p(f = 1 \g \mbx, \mby_1,\mby_0, \ell)]}{\E_{p(\mby_1, \mby_0, \ell \g \mbx)}[p(f = 1 \g \mbx, \mby_1,\mby_0, \ell)]}
    p(\ell = 1 \g \mbx, \mby_0, \mby_1) d\mby_0 d\mby_1 d\mbx
    \\
    =&  \int p(\mbx)\left[\int p(\mby_0 \g \mbx)
    \frac{p(\mby_1 \g \mbx) \E_{p(\ell, \mby_0 \g \mbx, \mby_1)}[p(f = 1 \g \mbx, \mby_1,\mby_0, \ell)]}{\E_{p(\mby_1, \mby_0, \ell \g \mbx)}[p(f = 1 \g \mbx, \mby_1,\mby_0, \ell)]}
    p(\ell = 1 \g \mbx, \mby_0, \mby_1) d\mby_0 d\mby_1 \right]d\mbx
    \\
    =&  \int p(\mbx)\left[\int
    \frac{p(\mby_1 \g \mbx) \E_{p(\ell, \mby_0 \g \mbx, \mby_1)}[p(f = 1 \g \mbx, \mby_1,\mby_0, \ell)] \E_{p(\mby_0 \g \mbx)} [p(\ell = 1 \g \mbx, \mby_0, \mby_1)]}{\E_{p(\mby_1, \mby_0, \ell \g \mbx)}[ p(f = 1 \g \mbx, \mby_1,\mby_0, \ell)]}  d\mby_1 \right]d\mbx
    \\
    =&  \int p(\mbx) \left[
    \frac{\E_{p(\mby_1 \g \mbx)}\left[\E_{p(\ell, \mby_0 \g \mbx, \mby_1)}[p(f = 1 \g \mbx, \mby_1,\mby_0, \ell)] \E_{p(\mby_0' \g \mbx)}[p(\ell = 1 | \mbx, \mby_0', \mby_1)]\right]}{\E_{p(\mby_1, \mby_0, \ell \g \mbx)}[p(f = 1 \g \mbx, \mby_1,\mby_0, \ell)]}
    d\mby_1\right] d\mbx
    \\
    =&  \int p(\mbx) \left[ \frac{\text{Cov}_{p(\mby_1 \g \mbx)}(\E_{p(\ell, \mby_0 \g \mbx, \mby_1)}[p(f = 1 \g \mbx, \mby_1,\mby_0, \ell)], \E_{p(\mby_0' \g \mbx)}[p(\ell = 1 | \mbx, \mby_0', \mby_1)])}{\E_{p(\mby_1, \mby_0, \ell \g \mbx)}[p(f = 1 \g \mbx, \mby_1,\mby_0, \ell)]} \right] d\mbx
    \\
    &+ \int p(\mbx) \left[ \frac{\E_{p(\ell, \mby_0, \mby_1 \g \mbx)}[p(f = 1 \g \mbx, \mby_1,\mby_0, \ell)] \E_{p(\mby_1 \mby_0 \g \mbx)}[p(\ell = 1 | \mbx, \mby_0, \mby_1)]}
    {\E_{p(\mby_1, \mby_0, \ell \g \mbx)}[ p(f = 1 \g \mbx, \mby_1,\mby_0, \ell)]}\right] d\mbx
    \\
     =&  \int p(\mbx) \left[ \frac{\text{Cov}_{p(\mby_1 \g \mbx)}(\E_{p(\ell,\mby_0 \g \mbx)}[p(f = 1 \g \mbx, \mby_0, \mby_1)], \E_{p(\mby_0' \g \mbx)}[p(\ell = 1 | \mbx, \mby_0', \mby_1)])}{\E_{p(\mby_1, \mby_0, \ell \g \mbx)}[ p(f = 1 \g \mbx, \mby_1,\mby_0, \ell)]} \right] d\mbx
    + \text{Win Rate}_{p(\mby_0\g\mbx)}[p(\mby_1\g\mbx)]
    \end{align*}
    \end{proof}
For SFT on the preferred sample, $p(f=1 \g \mbx, \mby_1, \mby_0, \ell) = \mathbf{1}[\ell = 1]$. Other options which decrease the denominator in the second term while maintaining covariance (e.g., quantile filtering based on $p(\ell=1\g\mbx,\mby,\mby_1)$) have the potential to enable further win rate improvements.

\paragraph{Theorem 5.1.}
(Win rate improvement of SFT on preferred sample)
\textit{Let $p(\mby_0\g\mbx)$ be the initial generative model, and $p_\text{SFT}(\mby\g\mbx)$ be the target distribution of supervised finetuning on preferred samples ($p(\mby_1\g\mbx, \ell=1)$, $p(\mby_0\g\mbx) = p(\mby_1\g\mbx)$). Then, 
\begin{align}
\text{Win Rate}_{p(\mby_0\g\mbx)}[p_\text{SFT}(\mby\g\mbx)] = 0.5 + 2 \mathbb{E}_{p(\mathbf{x})}\text{Var}_{p(\mby_1 \g \mbx)}\left[\int p(\mby_0 \g \mbx) p(\ell = 1 \g \mbx, \mby_0, \mby_1) d\mby_0\right],\label{eq:sft_gap}
\end{align}
which is less than 1.0 as long as there exist at least three responses in $\text{supp}(p(\mby\g\mbx))$ for any $\mbx$.
}
\begin{proof}
We first use the result of \Cref{lemma:gap} and plug in the condition $p(\mby_0\g\mbx) = p(\mby_1\g\mbx)$:
\begin{align}
&\text{Win Rate}_{p(\mby_0\g\mbx)}[p_\text{SFT}(\mby\g\mbx)] \\
&= 0.5 + \int p(\mbx) \left[\frac{\text{Variance}_{p(\mby_1 \g \mbx)}\left[\int p(\mby_0 \g \mbx) p(\ell = 1 \g \mbx, \mby_0, \mby_1) d\mby_0)\right]}{\int p(\mby_1' \g \mbx) \int p(\ell = 1 | \mbx, \mby_0'', \mby_1') p(\mby_0 \g \mbx) d\mby_0'' d\mby_1'}
\right]d\mbx \\
&= 0.5 + 2 \int p(\mbx) \text{Variance}_{p(\mby_1 \g \mbx)}\left[\int p(\mby_0 \g \mbx) p(\ell = 1 \g \mbx, \mby_0, \mby_1) d\mby_0)\right]d\mbx.
\end{align}

$\int p(\mby_0 \g \mbx) p(\ell = 1 \g \mbx, \mby_0, \mby_1) d\mby_0$ can only take values between 0 and 1. Win rate is maximized for a random variable taking values between zero and one when the variance is .25, but \Cref{prop:max_var} proves that this is unachievable as long as $\int p(\mby_0 \g \mbx) p(\ell = 1 \g \mbx, \mby_0, \mby_1) d\mby_0 \in (0, 1)$ for any $\mby_1$. \Cref{prop:three_responses} shows that $\int p(\mby_0 \g \mbx) p(\ell = 1 \g \mbx, \mby_0, \mby_1) d\mby_0 \in (0, 1)$ as long as there exist three or more elements in $\text{supp}(p(\mby\g\mbx))$. Thus, as long as there exist at least three responses in $\text{supp}(p(\mby\g\mbx))$ for any $\mbx$, win rate ove the initial model is less than 1.
\end{proof}

Note that under deterministic and transitive preferences that are evenly spaced apart,  \Cref{thm:sft_gap} matches the win rate result in Theorem 2 reported in \citep{gui2024bonbonalignmentlargelanguage} for best-of-n when $n=2$. In other words, \Cref{thm:sft_gap} generalizes the best-of-n result to general preference probabilities.
\paragraph{Theorem 5.2.}(Win rate improvement of filter + SFT)
Let $p(f \g \mbx, \mby_1, \mby_0, \ell)$ be a filter that selects data ($f=1$) for supervised finetuning, and $p(\mby_0\g\mbx) = p(\mby_1\g\mbx)$). Then, denoting $\E_{p(\ell, \mby_0 \g \mbx)}\left[p(f = 1 \g \mbx, \mby_0, \mby_1, \ell)\right]$ as $\text{AvgFilter}(\mbx,\mby_1)$ and $\E_{p(\mby_0 \g \mbx)}\left[p(\ell = 1 \g \mbx, \mby_0, \mby_1)\right]$ as $\text{AvgPref}(\mbx,\mby_1)$,
\begin{align}
&\text{Win Rate}_{p(\mby_0\g\mbx)}[p_\text{SFT}(\mby\g\mbx)] = 0.5 
+ \mathbb{E}_{p(\mathbf{x})}\frac{\text{Cov}_{p(\mby_1 \g \mbx)}[\text{AvgFilter}(\mbx,\mby_1), \text{AvgPref}(\mbx,\mby_1)]}{\E_{p(\mby_1\g\mbx)}\text{AvgFilter}(\mbx,\mby_1)}.
\label{eq:sft_gap}
\end{align}
\begin{proof}
    We use the result of \Cref{lemma:general-gap} with the condition $p(\mby_0 \g \mbx) = p(\mby_1 \g \mbx).$
\end{proof}

\section{Additional SFT Propositions}
Below, \Cref{prop:max_var} and \Cref{prop:three_responses} support Theorem 5.1, the result that the win rate improvement for SFT on preferred samples is limited.

\label{sec:sft_max_var}
\begin{proposition}\label{prop:max_var}
    Let $X$ be a random variable over [0, 1]. Then, $\text{Var}(X) < .25$ if $\Pr(0 < X < 1) > 0$.
\end{proposition}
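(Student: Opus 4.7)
The plan is to combine the standard bound $\text{Var}(X) \leq 1/4$ for random variables on $[0,1]$ (Popoviciu-type) with a strictness argument that leverages the hypothesis $\Pr(0 < X < 1) > 0$. The key observation is that on $[0,1]$ we have the pointwise inequality $x^2 \leq x$, and this is strict precisely on the open interval $(0,1)$.

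First I would write $\text{Var}(X) = \E[X^2] - \E[X]^2$ and use $X^2 \leq X$ almost surely to obtain $\E[X^2] \leq \E[X]$, which gives $\text{Var}(X) \leq \E[X] - \E[X]^2 = \E[X](1-\E[X])$. Second, I would apply the elementary bound $t(1-t) \leq 1/4$ for $t \in [0,1]$ (with equality only at $t=1/2$) to conclude $\E[X](1-\E[X]) \leq 1/4$. At this stage we have the non-strict bound $\text{Var}(X) \leq 1/4$.

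To upgrade to strict inequality, I would argue that the assumption $\Pr(0 < X < 1) > 0$ forces strict inequality in the first step. Indeed, the function $x \mapsto x - x^2$ is strictly positive on $(0,1)$, so
\begin{equation*}
\E[X] - \E[X^2] = \E[X(1-X)] \geq \E[X(1-X)\,\mathbbm{1}[0 < X < 1]] > 0,
\end{equation*}
where the last inequality holds because the integrand is strictly positive on an event of positive probability. Hence $\E[X^2] < \E[X]$, which combined with the chain above yields $\text{Var}(X) < \E[X](1-\E[X]) \leq 1/4$.

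I do not anticipate any real obstacle here; the statement is essentially a strict form of Popoviciu's inequality for a Bernoulli-style maximizer, and the two observations (pointwise domination $X^2 \leq X$ and AM-GM on $t(1-t)$) immediately combine once strictness is tracked through the first step.
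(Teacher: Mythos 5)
Your proof is correct. The core bound is in fact the same as the paper's: your step $\E[X^2]\le\E[X]$ (from $X^2\le X$ on $[0,1]$) giving $\text{Var}(X)\le \E[X](1-\E[X])\le 1/4$ is algebraically identical to the paper's convexity step $\E[(X-\mu)^2]\le\E[(1-X)\mu^2+X(1-\mu)^2]=\mu(1-\mu)$, since expanding the latter reduces it exactly to $X^2\le X$. Where you genuinely diverge is in how strictness is obtained. The paper argues via the equality case: it identifies the maximizer as the symmetric two-point law on $\{0,1\}$ and then reasons somewhat informally that any perturbation (changing $\mu$, or moving mass off the endpoints) can only lower the variance. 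You instead push strictness through the first inequality directly, observing that $\E[X(1-X)]\ge\E[X(1-X)\,\mathbbm{1}[0<X<1]]>0$ because the integrand is strictly positive on an event of positive probability, which yields $\text{Var}(X)<\E[X](1-\E[X])\le 1/4$ in one line. Your route buys a fully rigorous and shorter strictness argument that uses the hypothesis $\Pr(0<X<1)>0$ exactly where it is needed; the paper's route buys a characterization of when the bound $1/4$ is attained (mass $1/2$ on each endpoint), which is useful context for the surrounding SFT discussion but is not required for the proposition itself.
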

\begin{proof}
First, recall that for convex function $f$, $f(x) = f((1-x) \cdot 0 + x \cdot 1) \leq (1-x)\cdot f(0) + x\cdot f(1)$. Then, letting $f(x) = (x - \mu)^2$, $\mu = \E X$, the variance of a random variable is upper bounded as follows:
\begin{align*}
    \text{Var}(X) &= \mathbb{E}[(X - \mu)^2] \\
     &\leq \mathbb{E}[(1-X)\cdot f(0) + X\cdot f(1)] \\
     &= \mathbb{E}[(1-X)\mu^2 + X(1-\mu)^2] \\
     &=(1-\mu)\mu^2 + \mu(1-\mu)^2 \\
     &=\mu^2 - \mu^3 + \mu - 2\mu^2 + \mu^3 \\
     &= \mu - \mu^2 = \mu(1 - \mu).
\end{align*}
For a distribution over [0, 1], this quantity is maximized when $\mu=.5$, resulting in a variance of .25. This variance is achieved when $X$ places mass on only 0 and 1 symmetrically: $\text{Var}(X) = .5(0 - .5)^2 + .5(1 - .5)^2 = .25$. Any change to this distribution that changes $\mu$ will have a lower upper bound and thus not achieve variance .25, and any other change that does not change $\mu$ but moves probability mass away from the endpoints will only decrease variance.
\end{proof}

Since the win-rate is 
$\text{Win Rate}_{p(\mby_0\g\mbx)}[p_\text{SFT}(\mby\g\mbx)] = 0.5 + 2 \mathbb{E}_{p(\mathbf{x})}\text{Var}_{p(\mby_1 \g \mbx)}\left[
\E_{p(\mby_0 \g \mbx)}\left[p(\ell = 1 \g \mbx, \mby_0, \mby_1)\right]\right]$,
the former proposition establishes that $\E_{p(\mby_0 \g \mbx)}\left[p(\ell = 1 \g \mbx, \mby_0, \mby_1)\right]$ has to be zero or one in order for win rate to be 1.  For a fixed $\mby_1$, this expectation being one implies
\begin{align*}
\E_{p(\mby_0 \g \mbx)}\left[p(\ell = 1 \g \mbx, \mby_0, \mby_1)\right] = 1 \rightarrow p(\ell = 1 \g \mbx, \mby_0, \mby_1) = 1, \{\forall \mby_0 \in A\}, \text{for an $A$ such that } \E_{p(\mby_0 \g \mbx)}[\mathbbm{1}[\mby_0 \in A]]=1
\end{align*}
More plainly, $\mby_1$ must always be preferred over every generation with probability one. By symmetry, the expectation being zero implies that $\mby_1$ must always be dispreferred with probability one over every generation.

Being always dispreferred or preferred is impossible due to self-comparison, e.g., for a fixed $\mby_1$, the expectation over $p(\mby_0\g\mbx)$ will include $\mby_1$ in the support when samples are generated from the same model. 
Omitting self-comparisons, as long as the model has support over more than two generations, $\E_{p(\mby_0 \g \mbx)}\left[p(\ell = 1 \g \mbx, \mby_0, \mby_1)\right]$ must be less than one for at least one $\mby_1 \in \text{supp}(p(\mby_1\g\mbx))$.

\begin{proposition}\label{prop:three_responses}
Let $p(\mby \g \mbx)$ have support on three or more elements. Then there exists a $\beta$ in the support of $p(\mby \g \mbx)$ that is neither always preferred or dispreferred over every element in the support of $p(\mby \g \mbx)$.
\end{proposition}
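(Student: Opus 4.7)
The plan is to argue by contradiction using a small pigeonhole on the support. Suppose every $\beta$ in $\mathrm{supp}(p(\mby\g\mbx))$ is either always preferred over every element of the support or always dispreferred over every element (so that the proposition fails). I will show this forces the support to contain at most two points, contradicting the hypothesis that the support has three or more.

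First I would unpack the two extremal properties explicitly: ``$\beta$ always preferred over every element'' means that for every $\alpha$ in the support with $\alpha \neq \beta$, $p(\ell = 1 \g \mbx, \alpha, \beta) = 1$, and by the label-symmetry convention $p(\ell=1\g\mbx,\alpha,\beta) + p(\ell=1\g\mbx,\beta,\alpha)=1$ used throughout the paper, equivalently $p(\ell=1\g\mbx,\beta,\alpha) = 0$. The analogous statement with the roles of $1$ and $0$ swapped defines ``always dispreferred.'' This matches the ``omitting self-comparisons'' stipulation of the discussion preceding the proposition.

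The key step is then to show the ``always preferred'' class has at most one element. If two distinct elements $\beta_1, \beta_2$ were both always preferred, applying the defining property of $\beta_1$ at $\alpha = \beta_2$ gives $p(\ell=1\g\mbx,\beta_2,\beta_1)=1$, while applying the defining property of $\beta_2$ at $\alpha = \beta_1$ gives $p(\ell=1\g\mbx,\beta_2,\beta_1)=0$ (via the symmetry above applied to $p(\ell=1\g\mbx,\beta_1,\beta_2)=1$), a contradiction. A parallel argument shows at most one element can be always dispreferred. Moreover, no single element can be both always preferred and always dispreferred, since those two conditions would force $p(\ell=1\g\mbx,\alpha,\beta)=1$ and $p(\ell=1\g\mbx,\alpha,\beta)=0$ simultaneously for any other $\alpha$ in the support, which the three-element hypothesis guarantees exists. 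Hence the support contains at most $1+1=2$ points, contradicting the standing assumption, so some $\beta$ must be neither always preferred nor always dispreferred.

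The main obstacle is purely definitional rather than mathematical: one has to pin down what ``always preferred over every element in the support'' means for an asymmetric preference classifier. Under the natural reading that $\beta$ wins with probability $1$ in either ordering of the pair (which is exactly the label-symmetry convention the paper uses elsewhere), the pigeonhole contradiction above is immediate; the remaining work is just bookkeeping of the two classes and invoking that $|\mathrm{supp}(p(\mby\g\mbx))| \geq 3$.
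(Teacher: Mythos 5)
Your proof is correct and rests on the same key fact as the paper's: label symmetry of the preference classifier forces the ``always preferred'' class and the ``always dispreferred'' class to each contain at most one element, so a support of size three or more must contain an element that is neither. The paper's proof exhibits this element directly (positing an always-preferred $\alpha$ and always-dispreferred $\gamma$ and showing the remaining $\beta$ is forced by symmetry to lose to $\alpha$ and beat $\gamma$), whereas you package the identical counting as a pigeonhole contradiction and spell out the edge cases (an element lying in both classes, or one class being empty) that the paper leaves implicit --- a presentational difference, not a different argument.
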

\begin{proof}

Let $\alpha$ be an element that is deterministically preferred over every element in the support of $p(\mby \g \mbx)$, then by symmetry of $(\ell = 1 \g \mbx, \mby_0, \mby_1)$ no other element can be deterministically preferred over every other element.  Preferences for $\alpha$: $p(\ell= 1 \g \mbx, \mby_1 = \alpha, \mby_0=\beta) = 1$ and $p(\ell= 1 \g \mbx, \mby_1 = \alpha, \mby_0=\gamma) = 1$.

Then let $\gamma$ be an element that is deterministically dispreferred over every element in the support of $p(\mby \g \mbx)$. Preferences for $\gamma$: $p(\ell= 1 \g \mbx, \mby_1 = \gamma, \mby_0=\alpha) = 0$ and $p(\ell= 1 \g \mbx, \mby_1 = \gamma, \mby_0=\beta) = 0$. 
The preference probabilities for $\beta$ are fixed by symmetry from the above assumptions
    \begin{itemize}
        \item Since $p(\ell= 1 \g \mbx, \mby_1 = \alpha, \mby_0=\beta) = 1$, then $p(\ell= 1 \g \mbx, \mby_1 = \alpha, \mby_1=\gamma) = 0$
        \item Since $p(\ell= 1 \g \mbx, \mby_1 = \gamma, \mby_0=\beta) = 0$, $p(\ell= 1 \g \mbx, \mby_1 = \beta, \mby_0=\gamma) = 1$ 
    \end{itemize}
    Thus, the $\beta$ generation beta is not deterministically preferred or dispreferred.
\end{proof}
Since being always determinstically preferred or dispreferred is generally not possible, SFT's improvement is limited.

\section{Expected Win Rate Improvement Expressions}
\label{sec:gaps}
Below, we present the expected win rate improvement over the original model for WRO-KL objectives. Letting $p(\mby_0 \g \mbx) = p(\mby'_0 \g \mbx) = p(\mby''_0 \g \mbx)$ and $p(\mby_1 \g \mbx) = p(\mby'_1 \g \mbx)$, we have:
\begin{align}
\begin{aligned}
&\text{Win Rate}_{p(\mby_0\g\mbx)}[p_\text{WRO-KL}(\mby\g\mbx)] \\ = &\int p(\mbx) \left[\frac{\mathbb{E}_{p(\mby_1 \g \mbx)}\left[\E_{p(\mby_0 \g \mbx)} p(\ell = 1 \g \mbx, \mby_0, \mby_1) \exp\Big(\frac{1}{\beta}\E_{p(\mby'_0 \g \mbx)} h \cdot p(\ell = 1 \g \mbx, \mby'_0, \mby_1)\Big)\right]}{\E_{p(\mby_1' \g \mbx)} \exp\Big(\frac{1}{\beta}\E_{p(\mby''_0 \g \mbx)} h \cdot p(\ell = 1 \g \mbx, \mby''_0, \mby'_1)\Big)}
\right]d\mbx.\label{eq:wro_wr}
\end{aligned}
\end{align}

Consequently,  assuming the BT assumption holds in the preference environment, the expected win rate improvement for the target of RLHF/DPO is as follows:
\begin{align}
\begin{aligned}
&\text{Win Rate}_{p(\mby_0\g\mbx)}[p_\text{RLHF/DPO}(\mby\g\mbx)] \\ = &\int p(\mbx) \left[\frac{\mathbb{E}_{p(\mby_1 \g \mbx)}\left[\E_{p(\mby_0 \g \mbx)} p(\ell = 1 \g \mbx, \mby_0, \mby_1) \exp\Big(\frac{1}{\beta}\E_{p(\mby'_0 \g \mbx)} [\text{logit }p(\ell = 1 \g \mbx, \mby'_0, \mby_1)]\Big)\right]}{\E_{p(\mby_1' \g \mbx)} \exp\Big(\frac{1}{\beta}\E_{p(\mby''_0 \g \mbx)} [\text{logit } p(\ell = 1 \g \mbx, \mby''_0, \mby'_1)]\Big)}
\right]d\mbx.
\end{aligned}
\end{align}

For completion, we write the expected win rate improvement for SFT in the same form, connecting the expressions in this section to the result of \Cref{thm:sft_gap}:
\begin{align}
    \begin{aligned}
&\text{Win Rate}_{p(\mby_0\g\mbx)}[p_\text{SFT}(\mby\g\mbx)] \\ = &\int p(\mbx) \left[\frac{\mathbb{E}_{p(\mby_1 \g \mbx)}\left[\E_{p(\mby_0 \g \mbx)} p(\ell = 1 \g \mbx, \mby_0, \mby_1) \E_{p(\mby'_0 \g \mbx)} [p(\ell = 1 \g \mbx, \mby'_0, \mby_1)]\right]}{\E_{p(\mby_1' \g \mbx)} \E_{p(\mby''_0 \g \mbx)} [p(\ell = 1 \g \mbx, \mby''_0, \mby'_1)]}
\right]d\mbx.
\end{aligned}
\end{align}
\section{Comparing methods}

In \Cref{tab:overall}, we compare various preference learning methods along the axes of target distribution, objective, and the method of estimating the preference classifier. Ideally, we want a method's target solution is able to put more probability mass over the more preferred sequences; its objective directly seeks to approximate that target over some other goal; and its estimate of the preference classifier is as accurate as possible.

\begin{table}[H]
    \centering
    \caption{Comparison of preference learning algorithms along three dimensions of design choices: target distribution, objective, and estimation of the preference classifier. For readability, we substitute the preference classifier distribution $p(\ell=1|\mbx, \mby_0, \mby)$ with $p_\text{clf}$.}
    \begin{tabular}{rlll}
    \toprule
         & Target (unnormalized) & Objective (per-query) & Preference Classifier \\
         \midrule
         WRO-KL & $p(\mby\g\mbx)\exp(\frac{1}{\beta}\mathbb{E}_{p(\mby_0|\mbx)}[h \cdot p_\text{clf}])$ & $\text{KL}(p_\theta(\mby\g\mbx) \parallel   p^*(\mby \g \mbx))$ & $\hat{p}_\text{clf}$ \\
         RLHF & $p(\mby\g\mbx)\exp(\frac{1}{\beta}\mathbb{E}_{p(\mby_0|\mbx)}[\text{logit }p_\text{clf}])$ & $\text{KL}(p_\theta(\mby\g\mbx) \parallel  p^*(\mby \g \mbx))$ & $\hat{r}(\mbx, \mby)$, BT \\
         DPO & $p(\mby\g\mbx)\exp(\frac{1}{\beta}\mathbb{E}_{p(\mby_0|\mbx)}[\text{logit }p_\text{clf}])$ & $\text{KL}(p^*_\text{clf}\parallel p_{\theta, \text{clf}})$ & \Cref{eq:dpo_param}, BT \\
         SFT & $p(\mby\g\mbx)\mathbb{E}_{p(\mby_0|\mbx)}[p_\text{clf}]$ & $\text{KL}(p^*(\mby\g\mbx) \parallel  p_\theta(\mby \g \mbx))$ & Bypass \\
         \bottomrule
    \end{tabular}
    \label{tab:overall}
\end{table}

\section{Judge model}
\label{sec:judge}

To train the judge model, we finetune a Pythia-2.8b base model using the same pairwise training set used to train all SFT models by simply modifying the prompt to take in $\mbx, \mby_0, \mby_1$ into account. The prompt template is as follows: 

\indent \texttt{\textbackslash n\textbackslash n Human: + <Instruction> + \\\textbackslash n\textbackslash n Candidate Response A: + <response a> + 
\\\textbackslash n\textbackslash n Candidate Response B: + <response b> + \\
\textbackslash n\textbackslash n The better answer is Candidate Response }

The model is trained with a language modeling loss on the output, which is either `A` or `B`. For each pair of responses $\mby_0, \mby_1$, the training set includes two rows, one where $\mby_0$ is Response A, and one where $\mby_1$ is Response A. 

The model is trained using RMSProp with a learning rate of 5e-7 and batch size of 64. The model is trained with a maximum sequence length of 512 and a maximum input length of 511. On the evaluation dataset, the model achieves a per-row classification accuracy of 68.8.
(Training the same judge model with a  sequence length of 1024 achieves the same accuracy, so we choose to stick with 512 for efficiency.)

To obtain the preference probability of a pair of outputs, we run a forward pass through the judge model twice, once with each order of output pairs, and average the results.

To simulate more opinionated preferences in this preference environment, we sharpen the judge preference probabilities with temperature scaling ($T=0.2$) on the logit-transformed probabilities:
    $p(\ell=1\g\mbx,\mby_0,\mby_1) = \sigma\left(\text{logit }\hat{p}_\text{judge}(\ell=1\g\mbx,\mby_0,\mby_1) / T \right).$


\section{Experiment details}
\label{sec:training_appendix}

\paragraph{Dataset processing.} For the Open Assistant dataset \citep{Kopf2023OpenAssistantC}, we keep only the first turn in each conversation and English-only examples, following \citet{yuan2024self}. The dataset only has a train and validation split, so we split the original train set into a train and validation set and leave the validation set for testing / evaluation. The dataset includes multiple candidate responses for each input, all ranked, and to match the pairwise preference learning setup, we create a dataset of all possible pairs for each input. For instance, for a given input with three candidate responses (A, B, C), our paired dataset includes all three pairs (AB, BC, AC). For SHP, we keep only pairs where the score ratio is $> 2$, following \citet{rafailov2024direct}.

\paragraph{Training the initial model.} For each dataset, we finetune the base Pythia-2.8b models on all outputs, preferred and dispreferred. The resulting finetuned models serve as our initial models for preference learning. To train these models, we utilize a batch size of 64 and learning rate of 5e-7 chosen based on hyperparameter sweep between [1e-8, 5e-8, 1e-7, 5e-7, 1e-6] on OASST. Following \citet{rafailov2024direct}, we use the RMSProp optimizer with a learning rate warm up of 150 steps and constant learning rate schedule otherwise. We evaluate every 100 steps and choose the best checkpoint based on validation loss.

\paragraph{SFT and DPO experiments.} 
SFT and DPO experiments follow the same training configuration as the initial model.

\paragraph{RL experiments.} 
We use the implementation of reward model training and PPO from the TRL library \citep{vonwerra2022trl}. For reward model training, we use a batch size of 64, learning rate of 5e-7 for Pythia2.8b, and checkpoint every 100 steps, matching the SFT and DPO experiments. For PPO, we use a learning rate=1e-6 (obtained through a hyperparameter sweep of [1e-7, 5e-7, 1e-6] on OASST), batch size=128, and PPOConfig defaults for all other hyperparameters. We checkpoint every five steps and choose checkpoint with the best policy loss (namely, ignoring the estimation of the value head). 

\paragraph{Win rate evaluations.}
We sample a set of 100 input prompts from the test set of a given dataset (same 100 prompts for all models) and perform win rate evaluation using the oracle judge for the dataset. 

\section{Exploring optimization benefits of RLHF}
\label{sec:rlhf_variance}

Below, we compare the win rate results when optimizing the RLHF objective versus a non-optimized WRO-KL-BT-logit objective that keeps the $\mathbb{E}_{p(\mby_0\g\mbx)}r(\mbx, \mby_0)$ term. Namely, the RLHF objective we optimize is
\begin{align}
    \minus\mathcal{L}_\text{RLHF}(\theta) =\max_{\theta} \mathbb{E}_{p(\mbx)} \big[\E_{p_\theta(\mby \g \mbx)}  [r(\mbx, \mby)] - \beta\text{KL}(p_\theta(\mby \g \mbx) \parallel  p_\text{ref}(\mby \g \mbx))\big].
\end{align}

The non-optimized WRO-KL-BT-logit objective we optimize is
\begin{align}
    \minus\mathcal{L}_\text{RLHF}(\theta) =\max_{\theta} \mathbb{E}_{p(\mbx)} \big[\E_{p_\theta(\mby \g \mbx)}  [r(\mbx, \mby) - \mathbb{E}_{p(\mby_0\g\mbx)}r(\mbx, \mby_0)] - \beta\text{KL}(p_\theta(\mby \g \mbx) \parallel  p_\text{ref}(\mby \g \mbx))\big].
\end{align}

Win rate results can be found in \Cref{tab:rlhf_variance}. All experimental details match that of the main paper.

\begin{table}[h]
    \centering
    \caption{Comparison of win rates between RLHF (optimized) and WRO-KL-BT-logit without additional optimization (non-optimized). RLHF's dropping of the constant in the objective decreases variance in the gradient estimates and generally improves results with small enough $\beta$. However, this change does not yield systematic benefits for every setting, suggesting that there is still room for improvement.}
    \begin{tabular}{llllll}
    \toprule
          & &         $\beta=0.001$ &          $\beta=0.01$ &           $\beta=0.1$ &             $\beta=1$ \\
    Dataset &  &               &               &               &               \\
    \midrule
    HH & non-optimized &  51.79 (1.12) &  54.59 (0.90) &  68.68 (0.80) &  \textbf{56.88 (0.46) }\\
          & optimized &  \textbf{70.46 (0.87)} &  \textbf{63.93 (1.24)} &  69.44 (0.90) &  54.60 (0.67) \\
    OASST & non-optimized &  60.74 (3.65) &  61.79 (3.64) &  \textbf{73.11 (1.16)} &  60.41 (2.05) \\
          & optimized &  62.05 (3.65) &  63.95 (3.38) &  66.72 (1.16) &  58.45 (1.82) \\
    \bottomrule
\end{tabular}
    \label{tab:rlhf_variance}
\end{table}

\end{document}